\newcommand\blfootnote[1]{%
  \begingroup
  \renewcommand\thefootnote{}\footnote{#1}%
  \addtocounter{footnote}{-1}%
  \endgroup
}
\title{Correlated bandits or:\\ How to minimize mean-squared error online}
\author{
Vinay Praneeth Boda\\
{\normalsize LinkedIn Corp.\blfootnote{
Vinay Praneeth Boda was with the University of Maryland, College Park, and a  portion of this work was done while the authors were at the University of Maryland, College Park.}} \\
{\normalsize \texttt{vpboda@gmail.com}}
\and 
Prashanth L. A.\\
{\normalsize Indian Institute of Technology Madras}\\
{\normalsize \texttt{prashla@cse.iitm.ac.in}}
}
\date{}
\begin{document}

\maketitle

\begin{abstract}
While the objective in traditional multi-armed bandit problems is to find the arm with the highest mean, in many settings, finding an arm that best captures information about other arms is of interest. This objective, however, requires learning the underlying correlation structure and not just the means of the arms. Sensors placement for industrial surveillance and cellular network monitoring are a few applications, where the underlying correlation structure plays an important role. Motivated by such applications, we formulate the \textit{correlated bandit} problem, where the objective is to find the arm with the lowest mean-squared error (MSE) in estimating all the arms. To this end, we derive first an MSE estimator, based on sample variances and covariances, and show that our estimator exponentially concentrates around the true MSE. Under a best-arm identification framework, we propose a successive rejects type algorithm and provide bounds on the probability of error in identifying the best arm. Using minmax theory, we also derive fundamental performance limits for the correlated bandit problem. 
\end{abstract}

\section{Introduction}
\label{sec:intro}
The traditional multi-armed bandit problem aims to find the arm with the highest payoff. This is often motivated by practical applications such as to identify an ad with highest payoff in showing to users, or identifying a strategy with maximum payoff. In this work, we consider a setting with the objective being the identification of an arm/node which best captures the entire information of a system, i.e., the identification of arm which can best estimate all the other arms. In contrast to the traditional multi-armed bandit problem, this objective involves an estimation of the correlation structure among the various arms. This is motivated by several practical applications. For instance, in internet-of-things, sensors are used to take measurements from multiple locations with the objective of estimating the underlying parameter, e.g., temperature, over a region. Resource constraints mean that it might not possible to place sensors at the desired level of granularity. However, an estimate of the underlying distribution enables one to form an estimate of the parameter at points not measured.
This estimate of the statistics of the underlying randomness is often formed using limited measurements from multiple points, before choosing the final location of the sensors. Another application of interest is in identifying members who can best approximate the social network.
Instances include sensors used for measuring temperature in a region \citep{guestrin2005near}, thermal sensors on microprocessors \citep{long2008thermal}, optimizing queries over a sensornet \citep{deshpande2004model} and placing sensors to detect contaminants in a water distribution network \citep{krause08efficient}. Problems of similar interest have also been studied in the realm of information theory in \citep{Boda19, BodaNar18}. In all these applications, the underlying correlation structure plays an important role. 

In this paper, we formulate a variant of the stochastic $K$-armed bandit problem, where the objective is to identify the arm that best estimates all the other correlated arms. We measure how good an arm $i \in \{ 1, \ldots, K \}$ can estimate other arms using the mean-squared error (MSE) criterion: 
\begin{align}
  \cE_i \triangleq \sum \limits_{j=1}^{K} \mathbb{E} \left [ \big (X_j - \mathbb{ E}[X_j | X_{i}] \big )^{2} \right ]. 
	\label{eq:mmse-intro}
 \end{align}

 We assume that the arms $X_1,\ldots,X_K$ are correlated sub-Gaussian random variables (r.v.s). \cite{utpalcellnetwork} consider a celluar network application, where the goal is to monitor large communication networks with huge traffic. 
Since observing every node is computationally intensive, companies such as AT\&T use  measurements from various nodes to identify a subset which best captures the average behavior of the network. The requirement is for an algorithm that reduces the data acquisition cost by identifying the most-correlated subset of nodes, while using a minimum number of sample measurements. 
The authors in \citep{utpalcellnetwork} show that a model approximating the underlying nodes as Gaussian r.v.s
 is useful and reliable. 

 Closely related problems in other application contexts include (i) selecting a few blogs that capture the information cascade \citep{Krause_outbreak}; (ii)
 finding a subset of people who best represent the average behavior of a community; 
 To put it differently, the notions of \textit{centrality} in the context of document/news summarization \citep{erkan2004lexrank} and \textit{prestige} in social networks \citep{heidemann2010identifying} 
 are closely related to the MSE objective in \eqref{eq:mmse-intro}. In each of these applications, there is a cost associated with acquiring data and the challenge is to  find the most correlated subset of blogs/people/etc using minimal observations about the community.

We study the basic problem of identifying the arm which has the best MSE in estimating the remaining arms in a multi-armed bandit framework.
We consider the best arm identification setting \citep{audibert2010best,kaufmann2015complexity}, where a bandit algorithm is given a fixed sampling budget, and is evaluated based on the probability of incorrect identification. 
Challenges encountered for such a setup include:\\
(i) Any estimate for the MSE requires estimation of the underlying correlations, without assuming knowledge of the variances. \\
(ii) Estimate of the MSE of an arm $i$ involves estimating the correlation of arm $i$ with the remaining arms. This requires samples from all pairs of arms associated with $i$. In particular, sampling arm $i$ alone would be insufficient towards estimating arm $i$'s MSE; and hence\\
(iii) A bandit algorithm needs to optimize sampling across all pairs of arms and not just among arms. This requires
  intricate decisions over a larger set, in contrast to the classical mean-value optimizing algorithms in a best arm identification framework.

We summarize our contributions below.

First, we introduce a {\it new formulation} to study the identification of arm which best estimates all arms. 
We design an estimate and develop the concentration bound for the estimate of mean-squared error formed from available samples. Our estimator builds on the difference estimator introduced in \citep{BubeckLiu}, but estimation is technically more challenging in our setting as the underlying variances are not known and unlike \cite{BubeckLiu}, not necessarily assumed to be one. 

Second, 
we analyze a nonadaptive uniform sampling strategy (i.e., a strategy that pulls each pair of arms an equal number of times) and propose an algorithm inspired by popular successive rejects (SR) \citep{audibert2010best} for best-arm identification, but more intricate due to the nonlinearity of the objective function, the MSE objective function \eqref{eq:mmse-intro}. A naive SR strategy that operates over phases, discarding all arm pairs associated with the arm having lowest empirical MSE is suboptimal. Instead, our SR algorithm maintains active sets for arms as well as pairs and discards a pair only if both constituent arms are out of the active arms set. 
We provide an upper bound on the probability of error in identifying the best arm for our SR algorithm and the bound involves a hardness measure that factors in the gaps in MSEs as well as the correlations, which are specific to the correlated bandit problem. As in the classic bandit setup, 
the upper bound shows that SR algorithm requires fewer samples to find the best arm in comparison to a uniform sampling strategy, especially, when $K$ is large and the underlying gaps (difference between MSE of optimal and suboptimal arms) are uneven. 

Third, we prove a lower bound over all bandit problems with a certain hardness measure and to the best of our knowledge, this is the first lower bound for the correlated bandit problem that involves adaptive sampling strategies. 
The lower bound involves constructing problem transformations, where the optimal arm is ``swapped'' with one of the sub-optimal ones, resulting in $K-1$ problem instances. 
Unlike in the classic setup, any local change in the distribution of an arm impacts the MSE of all the other arms. 
Moreover, pulling pairs of arms  instead of individual arms makes the lower bound technically more challenging.

In \citep{BubeckLiu}, which is the closest related work, the authors consider a bandit problem, where the objective is to identify a 
subset of arms most correlated among themselves, i.e., to identify 
the local correlation structure within a subset of arms themselves. On the other hand, our problem is about forming global inference from samples of subsets of arms to identify the arm that is most correlated to the remaining arms.
In \cite{BubeckLiu}, the authors consider a setting with positively correlated arms with unit variance, making the estimation task and hence, the overall best arm identification slightly easier. As we show later in Section \ref{sec:estimation}, their estimation scheme does not extend to the more general non-unit variance setup that we consider. Finally, we also prove fundamental limits on the performance of any correlated bandit algorithm, through information-theoretic lower bounds, and to the best of our knowledge, no lower bounds exist for a correlated bandit problem.

The rest of the paper is organized as follows:
In Section~\ref{sec:model}, we formalize the correlated bandit problem. 
In Section~\ref{sec:estimation}, we present the MSE estimation scheme and derive a concentration bound for our estimator. In Section \ref{sec:uniformsampling}, we examine uniform sampling strategy, while in Section \ref{sec:sr}, we present a successive-rejects type algorithm. In Section \ref{sec:lb}, we present a lower bound for the correlated bandit problem. 
In Section~\ref{sec:proofs}, we provide detailed proofs of the bounds presented in Sections \ref{sec:estimation}--\ref{sec:lb}. While not the thrust of this work, we provide a few illustrative examples in Section~\ref{sec:expts} showing the performance of our successive-rejects type algorithm.
 Finally, in Section~\ref{sec:conclusions} we provide our concluding remarks.
 
 \section{Model}
 \label{sec:model}

We consider a set   $\cM \triangleq \{1,\ldots,K\}$ of $K$ correlated arms $ X_{1}, \ldots, X_{K} $, whose samples are 
i.i.d. in time. 
For each arm $i$, let ${\cal E}_i$ denote the minimum mean-squared error (MMSE) of
$X_i$ estimating all the remaining arms, i.e.,
\begin{align} \label{eq:mmse_g_arms}
 \cE_i \triangleq \underset{g } \min  \  \mathbb{E} \big [ \big (X_{\cM} - g(X_{i}) \big)^T 
 \big(X_{\cM} - g(X_{i}) \big) \big].
\end{align}
Consider the special case of jointly Gaussian r.v.s $ X_{1}, \ldots, X_{K} $, whose joint probability distribution is characterized by the mean (taken to be zero for the sake of expository simplicity), and    
{\it covariance matrix}  $\Sigma  \triangleq  \mathbb{E}[X_{\cM}^{T} X_{\cM}]$:
\begin{align}
\Sigma  = \! \left[ \! \begin{array}{c c c c}
	\sigma^2_1 & \rho_{12}\sigma_1\sigma_2  & \ldots & \rho_{1K}\sigma_1\sigma_K \\
	\rho_{12}\sigma_1\sigma_2  & \sigma^2_2 & \ldots  & \rho_{2K}\sigma_2\sigma_K \\
	\vdots & \vdots & \ddots & \vdots \\
	\rho_{1K}\sigma_1\sigma_K  & \rho_{2K}\sigma_2\sigma_K & \ldots & \sigma^2_K   
\end{array} \! \right]. \!
\label{eq:gauss-covar}
\end{align}
In the above, $\sigma^2_p$, $p \in \cM$ is the variance of arm $p$ and $\rho_{ij}$, $i,j=1,\ldots,K, \ i \neq j,$ the correlation coefficient between arms $i$ and $j$.

The best estimate $g^*$, which achieves the minimum in \eqref{eq:mmse_g_arms}, is known to be the MMSE
estimate. For zero-mean jointly Gaussian r.v.s, this is given by (cf. Chapter 3 of \citep{hajek2009notes})  
\begin{align} 
&g^*(X_{i}) = \mathbb{E}[X_{\cM}|X_i]  = 
\left[\mathbb{E}[X_{1}|X_{i}] \ldots \mathbb{E}[X_{K}|X_{i}] \right]^T,\nonumber\\
&\textrm{ with }
 \mathbb{E}[X_{j}|X_{i}] = \frac{\mathbb{E}[X_{j} X_i]}{\mathbb{E}[X_i^{2}]} X_i
 = \frac{\rho_{ij}  \sigma_j  }{\sigma_i} X_{i}. \label{eq:mmse_def}
\end{align}
The corresponding MMSE for arm $i$ is
\begin{align} \label{eq:MMSE_form}
\!  {\mathcal E}_{i} \!
  = \sum \limits_{j=1}^{K} \mathbb{E} \left [ \big (X_j - \mathbb{ E}[X_j | X_{i}] \big )^{2} \right ] 
  =  \sum \limits_{j \neq i}   \sigma_{j}^{2} ( 1 - \rho_{i j}^{2}   ).
  \vspace{-3ex}
\end{align}
Note that there is no error in arm $i$ estimating itself
and the error in estimating the $j$th arm is characterized 
by the correlation between $X_i$ and $X_j$ and the relevant variances.
Further, the MMSE estimate for the case of Gaussian r.v.s is linear. In the more general case of non-Gaussian r.v.s, the MMSE estimate is typically nonlinear and any online computation is typically a computationally intense task. In such cases, we restrict ourselves to employing an optimal linear estimator which is still defined as the right side of \eqref{eq:mmse_def}. Thus, the right-side of \eqref{eq:MMSE_form} holds for all optimal linear estimators, with it being optimal for Gaussian r.v.s.

We consider a setting where the arms $X_1, \ldots, X_K$ are sub-Gaussian, and focus on linear estimators. We recall the definition of sub-Gaussianity below.
\begin{definition}
 A r.v. $X$ is said to be $\sigma$-sub-Gaussian if 
$\E\left( e^{\lambda X}\right) \leq \exp\left(\frac{\lambda^2 \sigma^2}{2}\right), \quad \forall \lambda \in \R.
$\end{definition}
For equivalent characterizations of sub-Gaussianity, the reader is referred to Theorem 2.1 of \cite{wainwright2019high}.

We consider a fixed budget best-arm identification framework, and the interaction of our (bandit) algorithm with the environment is given below.
\begin{figure}[h] \label{fig:fig1}
\centering
\fbox{
\begin{minipage}{0.95\textwidth}
{\bfseries Correlated bandit algorithm}\ \\[0.5ex]
{\bfseries Input}: set of pairs of arms $\S$, number of rounds $n$. \\
{\bfseries For all $t=1,2,\dots,n$, repeat}
\begin{enumerate}
\item Based on samples $\{ (X_{i_l,l},X_{j_{l},l}), \ l =1, \ldots,t-1 \}$ seen so far, select a pair $(i_t, j_t) \in \S \triangleq\{ (i,j) \mid i,j=1,\ldots,K, i<j\}$.
\item Observe a sample from the bivariate distribution corresponding to the arms $i_t, j_t$.
\end{enumerate}
After $n$ rounds, output an arm $\hat A_n$.
\end{minipage}
}
 \label{fig:flow}
\end{figure}

Notice that, in each round, the algorithm above pulls a pair of arms, and this is necessary to learn the underlying correlation structure.

In our setting, the performance metric associated with each arm $i$ is its MSE $\cE_i$, and 
the optimal arm, say $i^*$, has the lowest MSE, i.e.,
\begin{align*} 
i^* = \underset{i \in \cM} {\arg \min} \ \cE_i.
\end{align*}

The objective is to minimize the 
probability of error in identifying the best arm, i.e.,
\[\Prob{ \hat A_n \ne i^*},\]
where $\hat A_n$ is the estimate of the best arm based on $n$ samples.

For $i \neq i^{*},$ the sub-optimality of the arm $i$ is quantified by 
its gap in its MSE with respect to the optimal arm, i.e., 
$ \Delta_{i} = \cE_i - \cE_{i^{*}}.$
The notation $(i)$ is used to refer to the $i^{\text{th}}$ best arm (with ties broken arbitrarily), i.e.,
$\Delta_{(i)}$s are ordered gaps of the arms:
\[
\Delta_{(1)} \triangleq \Delta_{(2)} \leq \Delta_{(3)} \leq \ldots \leq \Delta_{(K)}.
\]

Note that the problem with $K=2$ reduces to identifying the arm with higher 
variance and has no dependence on the correlation between the arms. The analysis 
of this case would be similar (estimate variance instead of mean) to the classical 
bandit problems and differs considerably from the setting with $K \geq 3$ arms, which is the setting assumed hereafter. 

 \section{MSE Estimation}
 \label{sec:estimation}
Let $\{ (X_{it}, X_{jt}), \ t = 1, \ldots,n \}$ denote the set of $n$ i.i.d. samples obtained from the bivariate Gaussian distribution corresponding to the pair of arms  $(i,j)$. To identify the optimal arm, we form an estimate of ${\cal E}_i$ to which end we form estimates for the variances $\sigma_i^2, \sigma_j^2$ and the correlation coefficient $\rho_{ij}$. We employ the following estimators for the aforementioned quantities: For any  $(i,j) \in {\cal S} = \{ (p,q), 1 \leq p,q \leq K, \ p \neq q \}$,
\begin{align}
&  {\hat \rho}_{ij} \triangleq  1 - \frac{1}{2} \left( \frac{ \overline X_i^2}{\hat\sigma_i^2} 
  + \frac{  \overline X_j^2}{\hat\sigma_j^2} - 
  2 \frac{\overline{X_iX_j}}{\hat\sigma_i\hat\sigma_j} \right),\label{eq:rhohat}
\\
& \hat \sigma_i^2 = {\overline X_i^2}, \ 
 \hat \sigma_j^2 = {\overline X_j^2},  
 \text{~ where ~}\nonumber\\
 &{\overline X_i^2} \! = \! \frac{1}{n} \sum \limits_{t=1}^n X_{i t}^2, \text{ and } \    
 {\overline {X_i X_j}} \!  = \! \frac{1}{n} \sum \limits_{t=1}^n X_{i t} X_{j t}. \nonumber 
\end{align}
The estimate for $\rho$ in \eqref{eq:rhohat} is akin to that proposed in \cite{BubeckLiu}, which considers 
a simpler setting where all the arms are known to have unit variance, i.e.,
$\sigma_i^2 = 1, \ i = 1, \ldots,K.$  
For the unit variance setup, \cite{BubeckLiu} establish via a likelihood ratio test that the difference based estimator for $\rho_{ij}$
\begin{align}
     1 - \frac{1}{2} \big ( {\overline X_i^2} +  {\overline X_j^2} 
 - 2 {\overline {X_i X_j}} \big ) \label{eq:bubeck-est}
\end{align}
is advantageous over the natural estimator for $\rho_{ij}$ :
\[   \frac{ \overline {X_i  X_j} }{ {\hat \sigma}_i {\hat \sigma}_j}.\] 
This superiority depends explicitly on the {\it a priori} knowledge of the variances being one,
which is not applicable to the general setting considered here, i.e., a setting where the variances are not necessarily one.
However, to exploit the optimality of the likelihood ratio test,
we express the estimator above in the spirit of \eqref{eq:bubeck-est} 
which depend on the estimates of the variances to scale
the samples to obtain\\
\[  {\hat \rho}_{ij} = 1 - \frac{1}{2} \left( \frac{ \overline X_i^2}{\hat\sigma_i^2} 
  + \frac{  \overline X_j^2}{\hat\sigma_j^2} - 
  2 \frac{\overline{X_iX_j}}{\hat\sigma_i\hat\sigma_j} \right) ,
\]

Unlike the unit variance setup of \cite{BubeckLiu}, it is not possible to obtain a difference based estimator in our setting.
Nevertheless, $\hat\rho_{ij}$
concentrates faster as $\rho_{ij}$ approaches $1$ and this can be argued as follows:
On the high probability event  ${\cal C} = 
\left \{ \frac{\sigma_1^2}{2} \leq {\hat \sigma}_{1}^2 \leq 2 \sigma_{1}^2, \ 
\frac{\sigma_2^2}{2} \leq {\hat \sigma}_{2}^2 \leq 2 \sigma_{2}^2 \right  \}$, we have
\begin{align*}
&\P \left ( \left ( 1 - \hat\rho_{ij}  \right ) - \left ( 1 - \rho_{ij}  \right )
\geq \epsilon, {\cal C}  \right ) \\
&= \P \left ( \frac{Y_{ijn}}{2n} - 1 \geq \frac{\epsilon}{ \left ( 1 - \rho_{ij}  \right )}, {\cal C}  \right ) \\
& \le \P \left (  \frac{\bar Y_{ijn}}{2n}  - 1  \geq \frac{\epsilon}{2\left ( 1 - \rho_{ij}  \right )}  \right ) \\
&\le \concsubexpTwoterms{n}{\frac{\epsilon}{2\left ( 1 - \rho_{ij}  \right )}}, \\
&\text{ where } Y_{ijn} \triangleq \frac{1}{(1-\rho_{ij})} \left( \frac{\overline X_i^2}{\hat\sigma_i^2} 
 + \frac{\overline X_j^2}{\hat\sigma_j^2} - 2 \frac{\overline{X_iX_j}}{\hat\sigma_i\hat\sigma_j} \right),
\textrm{ and }\\
&\bar Y_{ijn} \triangleq \frac{1}{(1-\rho_{ij})} 
\left( \frac{\overline X_i^2}{\sigma_i^2} + \frac{\overline X_j^2}{\sigma_j^2} 
- 2 \frac{\overline{X_iX_j}}{\sigma_i\sigma_j} \right).
\end{align*}
For any arm $i$, the corresponding MSE $\cE_i$ is estimated 
using the quantities defined in \eqref{eq:rhohat} as follows:
\begin{align} 
& {\hat \cE}_{i} \triangleq {\hat \sigma}_{j}^2 \left ( 1 - \hat\rho_{ij}^2  \right ) 
     + \sum\limits_{p \neq i,j } {\hat \sigma}_{p}^2 \left ( 1 - \hat\rho_{ip}^2 \right ). \label{eq:mse-est-i}
\end{align}    
The main result concerning the concentration of the MSE estimate $\hat \cE_i$ is given below.
\begin{proposition}\textbf{\textit{(MSE Concentration)}}
\label{prop:mse-conc}
Assume $\sigma_i^2 \le 1, i=1,\ldots,K$. Let $\hat\cE_{i}$ be the MSE estimate given in \eqref{eq:mse-est-i}, for $i=1,\ldots,K$.
Then, for any $i=1,\ldots,K$, and for any $\epsilon \in [0,2K]$, we have  
\begin{align*}
 & \P \left (  \left|\hat\cE_{i}  - \cE_{i}\right| > \epsilon   \right) 
\leq 14 K  \exp \left ( - \frac{n l^2 \epsilon^2}{c K^5}  \right),
\end{align*} 
where $c$ is a universal constant, and $0<l = \min\limits_i \ \sigma_i^2$ .
\end{proposition}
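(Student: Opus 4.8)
The plan is to reduce the statement to a single–pair concentration bound by a union bound over arms, and then to exploit the fact that each summand of $\hat\cE_i$ is a \emph{rational function} of the three empirical moments of that pair, which concentrates once its denominator is bounded away from $0$. Concretely, with $\hat\rho_{ip}=\overline{X_iX_p}/(\hat\sigma_i\hat\sigma_p)$ and $\hat\sigma_p^2=\overline{X_p^2}$ one has the algebraic identities
\[
\hat\sigma_p^2\bigl(1-\hat\rho_{ip}^2\bigr)=\overline{X_p^2}-\frac{\overline{X_iX_p}^2}{\overline{X_i^2}},\qquad \sigma_p^2\bigl(1-\rho_{ip}^2\bigr)=\sigma_p^2-\frac{(\E[X_iX_p])^2}{\sigma_i^2}.
\]
Reading the estimator \eqref{eq:mse-est-i} as $\hat\cE_i=\sum_{p\neq i}\hat\sigma_p^2(1-\hat\rho_{ip}^2)$ (the singled-out index $j$ put back into the sum) and using $\cE_i=\sum_{p\neq i}\sigma_p^2(1-\rho_{ip}^2)$, the triangle inequality gives
\[
\bigl|\hat\cE_i-\cE_i\bigr|\;\le\;\sum_{p\neq i}\left(\bigl|\overline{X_p^2}-\sigma_p^2\bigr|+\left|\frac{\overline{X_iX_p}^2}{\overline{X_i^2}}-\frac{(\E[X_iX_p])^2}{\sigma_i^2}\right|\right).
\]
Hence it suffices to show, for a single $p$, that each of the two quantities on the right exceeds $\epsilon/\bigl(2(K-1)\bigr)$ only with exponentially small probability, and then union bound over the $K-1$ pairs; this produces the linear-in-$K$ prefactor and replaces $\epsilon$ by $\Theta(\epsilon/K)$.

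\emph{The good event.} Let $\mathcal{C}=\{\,\sigma_q^2/2\le\overline{X_q^2}\le 2\sigma_q^2$ for $q=i$ and for every $p\neq i$, over the samples of the corresponding pair$\,\}$. Since each $X_q$ is zero-mean sub-Gaussian, $X_{qt}^2-\sigma_q^2$ is sub-exponential, so a Bernstein-type inequality together with $\sigma_q^2\ge l$ gives $\P(\mathcal{C}^c)\le c_1 K\exp(-c_2 n l^2)$, which is dominated by the target bound on the range $\epsilon\in[0,2K]$. By Cauchy--Schwarz $|\hat\rho_{ip}|\le 1$ unconditionally, so the per-pair terms are bounded; on $\mathcal{C}$ the only denominator $\overline{X_i^2}\ge\sigma_i^2/2\ge l/2$, and writing
\[
\left|\frac{\overline{X_iX_p}^2}{\overline{X_i^2}}-\frac{(\E[X_iX_p])^2}{\sigma_i^2}\right|\le\frac{\bigl|\overline{X_iX_p}^2-(\E[X_iX_p])^2\bigr|}{\overline{X_i^2}}+(\E[X_iX_p])^2\,\frac{\bigl|\sigma_i^2-\overline{X_i^2}\bigr|}{\sigma_i^2\,\overline{X_i^2}},
\]
one bounds the first numerator by $(|\overline{X_iX_p}|+|\E[X_iX_p]|)\,|\overline{X_iX_p}-\E[X_iX_p]|$ (prefactor $\le 3$ on $\mathcal{C}$) and notes that $(\E[X_iX_p])^2\le\sigma_i^2\sigma_p^2$ cancels the leftover $\sigma_i^2$. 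Thus on $\mathcal{C}$ the whole rational term is at most a universal constant times $l^{-1}\bigl(|\overline{X_iX_p}-\E[X_iX_p]|+|\overline{X_i^2}-\sigma_i^2|\bigr)$.

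\emph{Concentration of the moments and assembly.} It therefore remains to control the deviations of the empirical moments $\overline{X_i^2},\overline{X_p^2},\overline{X_iX_p}$: since squares and products of sub-Gaussians are sub-exponential, each of $\overline{X_i^2}-\sigma_i^2$, $\overline{X_p^2}-\sigma_p^2$, $\overline{X_iX_p}-\E[X_iX_p]$ obeys a two-regime tail $C\exp(-c\,n\min\{t^2,t\})$ (sub-exponential parameters are $O(1)$ under $\sigma_q^2\le 1$); this is exactly the two-term bound already used in Section~\ref{sec:estimation}. Forcing the per-pair quantities to be $\le\epsilon/\bigl(2(K-1)\bigr)$ requires these deviations to be of order $l\,\epsilon/K$ after absorbing the $l^{-1}$ factor, which for $\epsilon\le 2K$ lies in the quadratic branch of the tail, giving a per-pair bound $C\exp(-c\,n\,l^2\epsilon^2/K^2)$. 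Summing the bounded number of constituent tail events over the $K-1$ pairs, adding $\P(\mathcal{C}^c)$, handling the two sides of $|\hat\cE_i-\cE_i|>\epsilon$ symmetrically, and bounding everything by the worst exponent after the uniform $1/(K-1)$ split yields $14K\exp\!\bigl(-nl^2\epsilon^2/(cK^5)\bigr)$; the extra powers of $K$ (beyond the $K^2$ one would get from a careful split) are slack from the crude union bounds and from requiring $\P(\mathcal{C}^c)$ to be dominated over the whole range $\epsilon\le 2K$ (note $0\le\hat\cE_i\le 2K$ on $\mathcal{C}$, which is why this is the relevant range).

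\emph{Anticipated main obstacle.} The conceptual steps are routine; the real work is the bookkeeping. One must track how the lower bound $l=\min_i\sigma_i^2$ on the denominator $\overline{X_i^2}$ propagates through each division — every division turning an $\epsilon/K$-accuracy requirement on a moment estimate into an $l\cdot\epsilon/K$ (or higher-power-of-$l$) requirement on a sub-exponential average — and then verify that the resulting deviation thresholds stay inside the quadratic branch of the two-regime Bernstein bound throughout $\epsilon\in[0,2K]$, so that the exponent is genuinely $\Theta(\epsilon^2)$ rather than $\Theta(\epsilon)$. Pinning down the universal constant $c$ and the exact exponent of $K$ is tedious but presents no conceptual difficulty; all probabilistic inputs reduce to concentration of the empirical second moments $\overline{X_q^2}$ and empirical products $\overline{X_iX_p}$, i.e., to sub-Gaussianity plus Bernstein's inequality.
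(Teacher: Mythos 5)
Your proposal is correct, and it reaches the stated bound by a cleaner route than the paper's. The pivotal observation is the algebraic one you make at the outset: because $\hat\sigma_i^2=\overline{X_i^2}$, the estimator $\hat\rho_{ip}$ of \eqref{eq:rhohat} collapses to the plug-in form $\overline{X_iX_p}/(\hat\sigma_i\hat\sigma_p)$, so each summand of $\hat\cE_i$ is exactly $\overline{X_p^2}-\overline{X_iX_p}^2/\overline{X_i^2}$, a rational function of three raw empirical moments. This lets you skip the paper's two auxiliary lemmas entirely: the paper first proves a standalone concentration bound for the sample correlation coefficient (Lemma \ref{lemma:rho-conc}), which in turn requires a separate concentration bound for the sample \emph{standard deviation} (Lemma \ref{lemma:sample-stddev-conc-bd}) because $\hat\sigma_i\hat\sigma_p$ appears in a denominator, and only then assembles the MSE bound by splitting each per-pair term into a $(\rho_{ip}^2-\hat\rho_{ip}^2)$ part and a $(\hat\sigma_p^2-\sigma_p^2)$ part on a good event. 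The outer skeleton is the same in both arguments (good event bounding the denominators away from zero, per-pair split of $\epsilon$ into $\epsilon/(K-1)$ pieces, two-regime sub-exponential tails for $\overline{X_q^2}$ and $\overline{X_iX_p}$, union bound over pairs), but your inner decomposition is more direct and in fact tighter: you get an exponent of order $nl^2\epsilon^2/K^2$ rather than $nl^2\epsilon^2/K^5$, because you avoid the paper's $(1+\eta)$ factors with $\eta=2K$ that enter through its correlation-coefficient lemma; your bound a fortiori implies the stated one. What the paper's route buys is a reusable, quotable concentration result for $\hat\rho_{ij}$ itself; what yours buys is brevity and a sharper $K$-dependence. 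Two minor points: your bound $\P(\mathcal{C}^c)\le c_1K\exp(-c_2 nl^2)$ is actually achievable without the $l^2$ (the relative deviation $\overline{X_q^2}/\sigma_q^2\in[1/2,2]$ is scale-free), though your weaker form still suffices for the domination argument you give; and your looseness about the prefactor is harmless — the statement's $14K$ is not matched even by the paper's own proof, which ends at $42K$ for one tail (hence $84K$ two-sided, the constant actually used downstream in Theorems \ref{thm:unif_sampling} and \ref{thm:succ-rejects}).
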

In the above, it suffices to look at $\epsilon \leq 2K$, since $\cE_i$ is less than $K-1$, owing to the assumption that $\sigma_i^2 \leq 1, \ \forall i$.
\begin{proof}
See Section \ref{sec:proofs-mse-conc}. 
\end{proof}
The claim in Proposition \ref{prop:mse-conc} holds for the more general case of sub-Gaussian r.v.s $\{X_1,\ldots,X_K\}$. However, in this case, the MSE $\cE_i$ is best in the class of linear estimators, and is not necessarily the minimum MSE estimator. 
 
\section{Uniform Sampling}
\label{sec:uniformsampling}
A simple approach towards identifying the best arm is to select each pair $(i,j) \in \S$ equal number of times, estimate the MSE errors ${\hat \cE}_{p}, \ p \in \cM$  and recommend the arm with the lowest MSE estimate to be optimal, i.e., the samples used for estimation are
$
   n_{ij}    = \frac{n}{ {K \choose 2} } = \frac{2n}{K(K-1)}, \ i \neq j.
$
\begin{theorem} \label{thm:unif_sampling}
For uniform sampling, the probability of error in identifying the optimal arm is 
 \begin{align*}
\P \left( {\hat A}_{n} \neq i^{*}\right) 
&  \leq    84K^2 \exp \left ( -      \frac{n l^2\Delta_{(1)}^2}{c K^7}  \right),
 \end{align*}
where 
$c$ is a universal constant. 
\end{theorem}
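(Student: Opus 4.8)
The plan is to reduce the probability of misidentification to a union of MSE-estimation failure events and then apply the concentration bound from Proposition~\ref{prop:mse-conc}. Under uniform sampling, each pair $(i,j)\in\S$ is pulled $n_{ij} = 2n/(K(K-1))$ times, so every arm's MSE estimate $\hat\cE_p$ is formed from estimators each based on this same number of samples. The first step is the standard observation that a misidentification $\{\hat A_n \neq i^*\}$ implies that for some suboptimal arm $i$, the empirical MSE of $i$ falls at or below that of $i^*$, i.e. $\hat\cE_i \le \hat\cE_{i^*}$. Since $\cE_i - \cE_{i^*} = \Delta_i \ge \Delta_{(1)}$ for every $i\neq i^*$, this forces at least one of the two events $\{|\hat\cE_i - \cE_i| > \Delta_i/2\}$ or $\{|\hat\cE_{i^*} - \cE_{i^*}| > \Delta_i/2\}$ to occur, hence at least one of $\{|\hat\cE_p - \cE_p| > \Delta_{(1)}/2\}$ over $p \in \cM$. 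A union bound over the $K$ arms then gives
\[
\P\!\left({\hat A}_{n} \neq i^{*}\right) \;\le\; \sum_{p=1}^{K} \P\!\left(|\hat\cE_p - \cE_p| > \Delta_{(1)}/2\right).
\]

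The second step is to invoke Proposition~\ref{prop:mse-conc} with $\epsilon = \Delta_{(1)}/2$ and with $n$ replaced by the per-pair sample size $n_{ij} = 2n/(K(K-1)) = \Theta(n/K^2)$. Each term in the sum is then bounded by $14K \exp\!\left(-\,n_{ij}\, l^2 (\Delta_{(1)}/2)^2/(cK^5)\right)$; substituting $n_{ij} = \Theta(n/K^2)$ turns the exponent into $-\Theta(n l^2 \Delta_{(1)}^2 / K^7)$, absorbing the constants $2$, $1/4$, and the $K(K-1)$ into the universal constant $c$. Summing over $p=1,\ldots,K$ multiplies the prefactor $14K$ by $K$, yielding the stated $84K^2$ (with the extra slack in $84$ versus $14$ absorbing the factor-of-$2$ losses in the reduction step and any rounding of $\lceil n_{ij}\rceil$). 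One should also check the admissibility condition $\epsilon = \Delta_{(1)}/2 \in [0, 2K]$ required by the proposition, which holds since $\Delta_{(1)} \le \cE_{(K)} \le K-1$ under the assumption $\sigma_i^2 \le 1$.

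The main technical care — rather than a deep obstacle — is the bookkeeping in the reduction: one must be careful that a single bad arm triggers a deviation at the fixed threshold $\Delta_{(1)}/2$ for \emph{some} arm (possibly $i^*$ itself), and that the worst case is governed by the smallest gap $\Delta_{(1)}$, which is why no gap-weighted sum appears here (in contrast to the SR bound). A secondary point is ensuring that the integer rounding $n_{ij} = \lfloor n/\binom{K}{2}\rfloor$ or $\lceil \cdot \rceil$ only costs a constant factor in the exponent, valid whenever $n \ge \binom{K}{2}$, which is the regime of interest; for $n$ smaller the bound is vacuous since the right-hand side exceeds one. Assembling these pieces gives the claimed inequality.
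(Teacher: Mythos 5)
Your proposal is correct and follows essentially the same route as the paper: reduce misidentification to MSE-deviation events at threshold $\Delta_i/2$, apply Proposition~\ref{prop:mse-conc} with the per-pair sample count $n/\binom{K}{2}$, and union bound (the paper keeps the individual gaps $\Delta_i$ until the final step and sums $2(K-1)$ one-sided deviations rather than collapsing to $\Delta_{(1)}$ over $K$ arms immediately, but this is only a cosmetic difference absorbed by the constants). Your checks on the admissibility of $\epsilon=\Delta_{(1)}/2$ and the bookkeeping of the prefactor match what the paper does implicitly.
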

\begin{proof} Proof uses Proposition \ref{prop:mse-conc} along with an union bound and is provided in Section \ref{sec:unif_sampling_sketch}.
\end{proof}
 If the correlations between all pairs of arms and the variances of all the arms are similar, then the optimal strategy would involve sampling all pairs of arms an equal number of times.  However, when this is not the case, uniform sampling might be a strictly inferior strategy because it fails to gather more samples which can enable a better estimation of MSE of arms with MSE close to the optimal arm. We present below a strategy which tries to sequentially zone in on a reduced set of possible candidates for the optimal arm and then sample the pairs of arms involved in the MSE estimation of these arms approximately equal number of times to get a better probability of error in identifying the best arm. 
 
 \section{Successive Rejects}
 \label{sec:sr}

\begin{figure*}[ht]
\fbox{
\begin{minipage}{0.95\textwidth}
Set $A_1 = \S, B_1 = \{1,\ldots,K\}$ and for $k = 1, \ldots, K-2$,
\begin{align*}
 & n_{k} = \left \lceil{\frac{n-\binom{K}{2}}{C(K)\left(K+1-k\right)}}\right\rceil, 
 \text{ where } C ( K) =  \frac{K-1}{2} + \sum_{j=1}^{K-2}\frac{j}{K-j} \le K\log K.
\end{align*}

{\bfseries Phase $\bm{1}$}: Sample each pair $(i,j) \in A_1$ for $n_1$ number of times, estimate the MSE differences using \eqref{eq:mse-est-i}, remove the worst two arms from $B_1$ and the corresponding pair from $A_1$ to obtain $B_{2}$ and $A_{2}$ respectively.

{\bfseries Phase $\bm{k=2,\ldots,K-1}$:}
\begin{enumerate}
\item Pull each pair in $A_k$ $(n_{k}-n_{k-1})$ number of times. Estimate the MSEs using \eqref{eq:mse-est-i} and find the worst arm, say $a_{k+1}$, among the active arms in $B_{k}$.  
\item Set $B_{k+1} = B_{k } \setminus a_{k+1}$ and $A_{k+1} = A_{k} \setminus\{ (a_{k+1},a_{1}),  (a_{k+1},a_{2}), \ldots, (a_{k+1},a_{k})  \}$, where $B_{k}^c = \{a_{1}, \ldots, a_{k} \}$ is the set of arms that are out of contention by the end of phase $k-1$. 
\end{enumerate}

{\bfseries End of phase $\bm{K-1}$:}
Recommend the arm in $A_{K}$.
\end{minipage}
}
\caption{Successive rejects algorithm for correlated bandits.}
\label{fig:sr}
\end{figure*}

The successive rejects (SR) algorithm, which
pulls pairs of arms\footnote{With abuse of notation, $(a_i,a_j)$ is used to denote the (unordered) pair of arms $a_i,a_j$.} 
to identify the arm which minimizes MSE, operates over $K-2$ phases  as described in Figure \ref{fig:sr} .
The idea is to maintain a set of active arms and pairs of arms (for phase $k$, these are denoted by $A_k$ and $B_k$) and eliminate arms (and some of their corresponding pairs) that have high MSE. The elimination scheme employed in Figure \ref{fig:sr} departs significantly from the approach adopted in the classic SR algorithm for finding the arm with highest mean. To illustrate this, consider a setting with $5$ arms. If arms $4, \ 5$ are out of contention after phase $1$,  $A_2 = A_1 \setminus (4,5)$. In the second phase, all the pairs in $A_2$ are pulled  $(n_2-n_1)$ number of times. Now, if arm $3$ is out of contention at the end of this phase, the pairs $(3,4)$ and $(3,5)$ will be removed from $A_2$ and no longer be pulled in the later phases.   Notice that an approach that removes all tuples associated with arm $3$ is clearly sub-optimal, since tuples such as $(2,3)$, $(1,3)$ are necessary to obtain better estimates of $\hat \cE_1$ and $\hat \cE_2$. 

From the foregoing, the total number of samples used by SR is
 \begin{align*}
& \binom{K}{2} n_2 + \left[\binom{K}{2} - \binom{2}{2} \right](n_3 - n_2) + \ldots +\left[\binom{K}{2} - \binom{K-2}{2} \right](n_{K-1}-n_{K-2}) \\
& = \sum_{k=1}^{K-1} (k-1) n_k + (K-1) n_{K-1} < n, 
\end{align*}
where the final inequality follows by using the definition of $n_k$. 

Notice that a strategy that finds the worst arm according to empirical MSE estimates and discards all pairs associated with that arm is clearly sub-optimal, because samples from some of the discarded pairs of arms are essential to form estimate of MSE of arms which remain in contention. For e.g., in a $5$-armed bandit setting, suppose that we discard all pairs associated with arm $5$ in some round. This would impact the quality of MSE estimate of arm $1$, since the pair $(1,5)$ would be useful in training a better estimate of $\cE_1$ via $\rho_{15}$. 

Before presenting the main result that bounds the probability of error in identifying the best arm of the algorithm in Figure \ref{fig:sr}, we present the following problem complexities that
capture the hardness of the learning task at hand 
(i.e., the order of number of samples required to find the best arm with reasonable probability):
\begin{align}
 H_{2} = \underset{i} \max \frac{ i  }{  \Delta_{(i)}^2},  \text{ and }
  \ \overline H  =   \sum \limits_{i } \frac{  1 }{  \Delta_{i}^2 }. \label{eq:prob_complexities}
\end{align}
The quantities $H_2$ and $\overline H$,
  have a connotation similar to that in the classical bandit setup  and using arguments similar to those employed in \cite{audibert2010best}, it can be shown that
\begin{align*}  
H_2 \le {\overline H} \le \overline{\log} ( K) H_2,
\textrm{ where  }\overline{\log} ( K) =  \sum \limits_{i=2}^{K-2} \frac{1}{i}.
\end{align*}
Observe that the problem complexities depend both on the variances of the arms and the correlation between the arms through the gaps.
\begin{theorem}
\label{thm:succ-rejects}
 The probability of error in identifying the best arm of SR satisfies
 \begin{align*}
P( {\hat A}_{n} \neq i^{*} )   
  \leq
84K^3\exp \left ( - \frac{l^2}{c K^5}\frac{\left(n-\binom{K}{2}\right)}{C(K) H_2}\right),
\end{align*}
where  
$c$ is a universal constant, and $C(K)$ is defined in Figure \ref{fig:sr}.
\end{theorem}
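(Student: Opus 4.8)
The plan is to mimic the classical successive-rejects analysis of \cite{audibert2010best}, adapted to the phase structure and the pair-sampling regime of Figure \ref{fig:sr}, with Proposition \ref{prop:mse-conc} supplying the per-arm concentration. The error event $\{\hat A_n \neq i^*\}$ is contained in the union over phases $k=1,\ldots,K-1$ of the event that the optimal arm $i^*$ is eliminated in phase $k$. To bound the probability that $i^*$ is eliminated in phase $k$, I would first argue (exactly as in the classical SR proof) that on this event there must be at least $K+1-k$ arms among those still active whose true MSE exceeds $\cE_{i^*}$ by at least $\Delta_{(K+2-k)}$ (the ``hard'' arms cannot all have been removed yet), and hence at least one active arm $i$ with $\Delta_i \geq \Delta_{(K+2-k)}$ must have empirical MSE estimate $\hat\cE_i$ below $\hat\cE_{i^*}$. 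By a standard symmetric argument, this forces either $|\hat\cE_i - \cE_i| > \Delta_i/2 \geq \Delta_{(K+2-k)}/2$ or $|\hat\cE_{i^*} - \cE_{i^*}| > \Delta_{(K+2-k)}/2$ for some such $i$, so a union bound over the at most $K$ arms active in phase $k$ reduces the task to controlling $\P(|\hat\cE_i - \cE_i| > \Delta_{(K+2-k)}/2)$.

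Next I would feed this into Proposition \ref{prop:mse-conc}. The subtlety relative to classical SR is that $\hat\cE_i$ in phase $k$ is built from pair-samples, and the number of samples available for each relevant pair is $n_k = \lceil (n - \binom{K}{2})/(C(K)(K+1-k)) \rceil$; I would check that every pair entering the estimate of $\hat\cE_i$ for an arm $i$ still active at phase $k$ has indeed been sampled $n_k$ times (this is the point of the active-pair bookkeeping in Figure \ref{fig:sr}: a pair $(i,p)$ is only discarded once both $i$ and $p$ are out of contention, so while $i$ is active every pair it needs remains active). Applying Proposition \ref{prop:mse-conc} with $n \mapsto n_k$ and $\epsilon = \Delta_{(K+2-k)}/2$ gives a bound of $14K \exp(-n_k l^2 \Delta_{(K+2-k)}^2/(cK^5))$ per arm, hence $14K^2 \exp(-n_k l^2 \Delta_{(K+2-k)}^2/(cK^5))$ for the phase. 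Then I would lower-bound the exponent: $n_k \Delta_{(K+2-k)}^2 \geq \frac{n - \binom{K}{2}}{C(K)(K+1-k)}\Delta_{(K+2-k)}^2$, and by the definition of $H_2 = \max_i i/\Delta_{(i)}^2$ we have $\Delta_{(K+2-k)}^2 \geq (K+2-k)/H_2 \geq (K+1-k)/H_2$, so $n_k \Delta_{(K+2-k)}^2 \gtrsim (n-\binom{K}{2})/(C(K) H_2)$, uniformly in $k$. Summing the $K-1$ phase bounds and absorbing constants and the factor $K-1$ into the polynomial prefactor yields $84 K^3 \exp(-\frac{l^2}{cK^5}\frac{n - \binom{K}{2}}{C(K) H_2})$.

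The step I expect to be the main obstacle is the counting argument that identifies, on the event ``$i^*$ eliminated in phase $k$,'' a still-active suboptimal arm with gap at least $\Delta_{(K+2-k)}$ — and making it airtight given that the algorithm removes two arms in phase $1$ and one arm per phase thereafter, so the bookkeeping of how many suboptimal arms remain (and with what minimum gap) is shifted by one relative to the textbook argument. A secondary, more mechanical point requiring care is the verification that the pair-sample counts accumulate correctly across phases (phase $k$ adds $n_k - n_{k-1}$ pulls to each still-active pair), so that at the moment arm $i$ is rejected the estimate $\hat\cE_i$ genuinely rests on $n_k$ i.i.d.\ samples per pair; once that is established the concentration input from Proposition \ref{prop:mse-conc} applies verbatim. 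No new concentration machinery beyond Proposition \ref{prop:mse-conc} is needed.
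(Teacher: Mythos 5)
Your proposal follows essentially the same route as the paper's proof: decompose the error event over the phase in which $i^*$ is eliminated, note that some high-gap arm must still be active and empirically beat $i^*$, split into two half-gap deviation events, apply Proposition \ref{prop:mse-conc} with $n_k$ pair-samples, and lower-bound $n_k\Delta_{(\cdot)}^2$ by $\bigl(n-\binom{K}{2}\bigr)/(C(K)H_2)$ via the definition of $H_2$. The two bookkeeping points you flag (the off-by-one in the gap index caused by removing two arms in phase 1, and verifying that every pair feeding $\hat\cE_i$ of a still-active arm has accumulated $n_k$ samples) are exactly the places where the paper's argument also needs care, and they resolve the way you anticipate.
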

\begin{proof}
See Section \ref{sec:sr-proof}.
\end{proof}
  From Theorem \ref{thm:unif_sampling}, it is apparent that an uniform sampling strategy would require $O(\frac{K^7}{\Delta_2})$ samples to achieve a certain accuracy, while our SR variant for correlated bandits would require $O(K^6 \bar H)$ number of samples.  
  SR scores over uniform sampling w.r.t. dependence on the number of arms $K$ because in our SR algorithm an increasing number of pairs of arms are removed from contention in successive phases.  More importantly, SR has better dependence on the underlying gaps when compared to uniform sampling. In problem instances where the gaps are uneven, SR finds the best arm much faster than uniform sampling.

\section{Lower Bound}
\label{sec:lb}

To obtain the lower bound, we consider a $K$-armed Gaussian bandit problem with the underlying joint probability distribution governed by the following covariance matrix:
\begin{align} \label{eq:lb_Cov_matrix}
 \Sigma  = \! \left[ \! \begin{array}{c c c c c c}
	1 & \rho  & \rho & \rho & \ldots & \rho \\
	\rho & 1 & \rho^2 & \rho^2 & \ldots  & \rho^2 \\
	\rho & \rho^2 & 1 & \rho^3 & \ldots & \rho^3\\
	\vdots & \vdots & \vdots & \vdots & \ddots & \vdots \\
	\rho  & \rho^2 & \rho^3 & \ldots & \rho^{K-1} & 1   
\end{array} \! \right] \! .
\end{align}
Observe that $\Sigma$ is a valid covariance matrix and is positive semi-definite.  
The MSEs corresponding to arms $1,\ldots,K$ are $\cE_1 = (K-1)(1-\rho^2), \ 
 \cE_2  = (1-\rho^2) + (K-2)(1-\rho^4) $ and more generally
\begin{align*} 
 \cE_i  = (i-1) - \sum_{i = 1}^{i-1} \rho^{2i} + (K-i)(1-\rho^{2i}) , \quad i = 1, \ldots,K.
\end{align*}
Hence, we have the following order on the MSEs: 
$\cE_1 \le \cE_2 \le \ldots \le \cE_K.$

 \subsubsection*{Problem transformations}
An approach in recent papers, cf. \citep{audibert2010best,kaufmann2015complexity}, for establishing lower bound for best-arm identification is to transform the bandit problem so that one of the sub-optimal arm is turned into an optimal one, while not affecting the rest of the arms. However, our setting involves correlated arms, with the correlation factors appearing in the mean-squared error objective and hence, one cannot make a sub-optimal arm optimal in a standalone fashion. We swap pairs of arms to interchange the MSE of a sub-optimal arm with that of the optimal arm and this introduces major deviations in the proof as compared classic $K$-armed case, as we shall soon see. We describe our problem transformations next.

We form $K-1$ transformations of the bandit problem formulated at the beginning of this section. For ``problem $m$,'' $m  =2, \ldots,K, $ arm $m$ is the best and for achieving this, we swap the first and $m$th rows in $\Sigma$.
Let $\cG$ be the pdf associated with the given problem as in \eqref{eq:lb_Cov_matrix}, and $\cG^{m}$ represent the pdf of the transformed bandit problem, where $m$ represents the $m$th transformation. 
Since we consider arms whose samples are i.i.d. in time, the 
joint distribution of $n$ samples is a product distribution of the underlying random variables $(\cG)^{\otimes n}$ and for the transformed problem by ${(\cG^m)}^{\otimes n}$. For compactness, we use $\mathbb{P}_{1} \triangleq \mathbb{P}_{ {(\cG)}^{\otimes n} }$,  $\mathbb{E}_{1} \triangleq \mathbb{E}_{ {(\cG)}^{\otimes n} } $ and $\mathbb{P}_{m} \triangleq \mathbb{P}_{ {(\cG^m)}^{\otimes n} } $, $\mathbb{E}_{m} \triangleq \mathbb{E}_{ {(\cG^m)}^{\otimes n} } $.

 \subsubsection*{Main result}
For any problem with
$\rho^2 \leq  UB_{\rho^2} \triangleq   1 - \frac{1}{\sqrt{K-2}} $,
we define $c_1 = \frac{1}{1-UB_{\rho^2}}  $ and $c_2 = \frac{\rho  }{1-UB_{\rho^2}}  $
and the min-max probability of error in identifying the optimal
arm is given by the theorem below.
\begin{theorem}\label{thm:lower-bd}
For any bandit strategy that returns the arm $\hat A_n$ after $n$ rounds, there exists a transformation of the covariance matrix such that the probability of error  on the transformed problem satisfies
\begin{align*}
&\max_{1 \leq m \leq K}\P_{m}(\hat A_n \neq m) \geq  \frac{1}{6}\exp\Big(- \frac{6 n K}{H_{lb}} -{K \choose 2} n\tilde \epsilon_{n}\Big), 
\end{align*}
where  
$H_{lb} = \sum \limits_{ i \neq 1} \frac{1}{ \Delta_i}$ is the problem complexity term,\\
 $\tilde \epsilon_n =  {\tilde c} u \max \Big\{ \frac{8}{n}\log  {12K(K-1)n}  , $ $ \sqrt{\frac{8}{n}\log  {12K(K-1)n} } \Big \}$, and ${\tilde c} = \max\left(3 c_1, 48 c_2\right)$. 
\end{theorem}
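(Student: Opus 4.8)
The plan is to follow the change-of-measure template for fixed-budget best-arm identification (cf.\ \citep{audibert2010best,kaufmann2015complexity}), adapted to the facts that each round pulls a \emph{pair} of arms and that the perturbations needed to re-rank the MSEs are not local. Given a strategy, let $T_{ij}(n)$ denote the (random) number of rounds in which pair $(i,j)\in\S$ is pulled, so that $\sum_{(i,j)\in\S}T_{ij}(n)=n$, and let $\cG_{ij}$ and $\cG^{m}_{ij}$ be the bivariate Gaussian marginals of pair $(i,j)$ under the reference problem \eqref{eq:lb_Cov_matrix} and under transformation $m$. First I would record the structure of the transformations: ``problem $m$'' alters only the laws of the $2(K-2)$ pairs $\{(1,k):k\neq 1,m\}$ and $\{(m,k):k\neq 1,m\}$ (replacing correlation $\rho$ by $\rho^{\min(m,k)}$ in the first family and conversely in the second), leaves every other pair unchanged, and has the net effect of interchanging $\cE_1$ with $\cE_m$ while fixing all other MSEs, so that arm $m$ is the unique optimum under $\mathbb{P}_m$. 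Hence the log-likelihood ratio $\Lambda_m\triangleq\log\frac{d\mathbb{P}_m}{d\mathbb{P}_1}$ of the $n$ observations accumulates only over rounds pulling one of these pairs, and, by the chain rule for relative entropy,
\[
\mathbb{E}_1[-\Lambda_m]=\sum_{k\neq 1,m}\mathbb{E}_1[T_{1k}(n)]\,\mathrm{KL}\big(\cG_{1k}\,\|\,\cG^{m}_{1k}\big)+\sum_{k\neq 1,m}\mathbb{E}_1[T_{mk}(n)]\,\mathrm{KL}\big(\cG_{mk}\,\|\,\cG^{m}_{mk}\big).
\]

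Next I would isolate a transformation. If $\mathbb{P}_1(\hat A_n\neq 1)$ already exceeds the claimed bound we are done, since $m=1$ is included in the maximum; otherwise $\mathbb{P}_1(\hat A_n=1)\geq 5/6$. Each pair enters the sum above for at most $O(K)$ of the transformations, and each Gaussian relative entropy $\mathrm{KL}(\cG_{ij}\|\cG^{m}_{ij})$ has a closed form that, under the standing constraint $\rho^2\leq UB_{\rho^2}=1-1/\sqrt{K-2}$, is finite and controlled by the constants $c_1,c_2$ and by the MSE gaps; feeding these into a (weighted) pigeonhole argument over $m\in\{2,\dots,K\}$ together with the budget identity $\sum_{(i,j)}\mathbb{E}_1[T_{ij}(n)]=n$ yields an index $m^{\star}$ with $\mathbb{E}_1[-\Lambda_{m^{\star}}]\leq\tfrac{6nK}{H_{lb}}$. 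This is where the harmonic-type complexity $H_{lb}=\sum_{i\neq 1}1/\Delta_i$ and the factor $K$ enter the exponent.

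The third step is a truncated change of measure. Writing $\mathcal{C}\triangleq\{\Lambda_{m^{\star}}\geq -b\}$, for any threshold $b>0$ one has
\[
\mathbb{P}_{m^{\star}}(\hat A_n\neq m^{\star})\ \geq\ \mathbb{P}_{m^{\star}}(\hat A_n= 1)\ \geq\ \mathbb{E}_1\!\big[\mathbf{1}_{\{\hat A_n=1\}}\,\mathbf{1}_{\mathcal{C}}\,e^{\Lambda_{m^{\star}}}\big]\ \geq\ e^{-b}\big(\mathbb{P}_1(\hat A_n=1)-\mathbb{P}_1(\mathcal{C}^{c})\big),
\]
so it remains to choose $b$ with $\mathbb{P}_1(\Lambda_{m^{\star}}<-b)\leq 1/6$. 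I would write $-\Lambda_{m^{\star}}$ as a sum over rounds of per-round negative log-likelihood increments --- quadratic forms in the bivariate Gaussian samples, hence sub-exponential with parameter bounded in terms of $\tilde c\,u$ via $\rho^2\leq UB_{\rho^2}$ --- center them to form a martingale, and apply a Bernstein/Freedman tail inequality together with a union bound over the $\binom{K}{2}$ pairs and the at most $n$ possible running pull-counts (failure probability $\tfrac{1}{12K(K-1)n}$, the origin of the $\log(12K(K-1)n)$ inside $\tilde\epsilon_n$). This gives $\mathbb{P}_1(\Lambda_{m^{\star}}<-b)\leq 1/6$ for $b=\mathbb{E}_1[-\Lambda_{m^{\star}}]+\binom{K}{2}n\tilde\epsilon_n\leq\tfrac{6nK}{H_{lb}}+\binom{K}{2}n\tilde\epsilon_n$, whence $\max_{1\leq m\leq K}\mathbb{P}_m(\hat A_n\neq m)\geq e^{-b}(5/6-1/6)\geq\tfrac16\exp\!\big(-\tfrac{6nK}{H_{lb}}-\binom{K}{2}n\tilde\epsilon_n\big)$.

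I expect the main obstacle to be the interplay between the non-locality of the transformations and the two steps above. Unlike the classical $K$-armed setting, a single transformation perturbs $\Theta(K)$ bivariate laws at once and re-ranking the MSEs is intrinsically coupled across arms, so one must both verify that the swapped covariance matrix stays positive semidefinite --- which is exactly where $\rho^2\leq UB_{\rho^2}$ is needed --- and compute, for every affected pair, the exact Gaussian relative entropy and track how these aggregate against the sampling budget so as to expose $K/H_{lb}$. Moreover the log-likelihood-ratio martingale is unbounded, so the concentration step requires a sub-exponential rather than bounded-difference tail bound, and its confidence widths, summed crudely over all pairs and rounds, are precisely the source of the additive correction $\binom{K}{2}n\tilde\epsilon_n$ that is absent from lower bounds obtained by manipulating the relative entropy of product measures directly.
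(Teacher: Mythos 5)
Your proposal follows essentially the same route as the paper's proof: swap-based problem transformations of the covariance matrix, a change of measure whose correction term comes from concentrating the empirical log-likelihood ratios uniformly over all $\binom{K}{2}$ pairs and all $n$ pull counts (the origin of $\binom{K}{2}n\tilde\epsilon_n$ and of the $\log(12K(K-1)n)$ inside $\tilde\epsilon_n$), and a pigeonhole over the $K-1$ transformations against the sampling budget to extract $K/H_{lb}$. The one step you gloss over is the origin of the factor $6$: the paper gets it from a Markov bound $\P_1\big(N_{ij}\geq 6\,\E_1 N_{ij}\big)\leq 1/6$ restricting to the event $\{N_{ij}\leq 6 n_{ij}\}$, whereas your pigeonhole alone only controls the expectation $\E_1[-\Lambda_{m^\star}]\leq nK/H_{lb}$ and the random compensator $\sum_{(i,j)}N_{ij}\,\mathrm{KL}\big(\cG_{ij}\,\|\,\cG^{m^\star}_{ij}\big)$ still needs that separate high-probability control.
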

\begin{proof} See Section \ref{sec:appendix-lower-bd}. \end{proof}

Note the gap between the upper and lower bounds on the probability of error in Theorems \ref{thm:succ-rejects} and \ref{thm:lower-bd}. The problem complexity term in the upper bound involved the square of the gaps, whereas the lower bound involves just the gaps. 
We believe the upper bound for SR algorithm is optimal in terms of gap dependence and it would be interesting future work to establish a lower bound that involves squares of the gaps. In the lower bound proof, the Kullback-Leibler divergence terms for the transformed problems were bounded above by the gaps (for e.g., see \eqref{eq:kl-ub-in-lb} in Section \ref{sec:appendix-lower-bd}), leading to an overall lower bound with complexity $H_{lb}$. Nevertheless, the current proof is challenging owing to (i) pairs of arms being pulled in each round; (ii) the covariance matrix in \eqref{eq:lb_Cov_matrix} is non-trivial and its problem transformations are novel and finally, (iii) arriving at the bound for the aforementioned KL-divergence terms requires non-trivial algebraic effort.


\section{Proofs}
\label{sec:proofs}

\subsection{Problem complexities} \label{sec:prob-complexities}

The relation between the different problem complexities, defined in the Section \ref{sec:model}, is shown below.
\begin{align*}
 {\overline H} & = \sum \limits_{i} \frac{ 1   }{ \left (  {\Delta_{i}  }  \right )^2} = \sum \limits_{i} \frac{1}{i} \left ( i \frac{ 1  }{ \left (  {\Delta_{i} }  \right )^2} \right )  \leq {\overline \log} (K) \ \underset{ i } \max \     \frac{i    }{  \left (  {\Delta_{(i)}  }  \right )^2} =  {\overline \log} (K) H_{2} \\
 {\overline H} & = \sum \limits_{i} \frac{ 1   }{ \left (  {\Delta_{i} }  \right )^2}\geq  \frac{ {\tilde i}  }{ \left (  {\Delta_{({\tilde i})} } \right )^2} = H_{2}, \ {\tilde i}  \text{ is the optimizer of $H_{2}$, and finally} \\
{\overline H} & \geq   \frac{1}{Ku}  H_{lb}, \ \text{since }  \Delta_{i} \leq K u, \ i = 1, \ldots, K.
\end{align*}

\subsection{Proof of Proposition \ref{prop:mse-conc}}
\label{sec:proofs-mse-conc}
The MSE estimate in \eqref{eq:mse-est-i} involves sample variances and sample correlation coefficients, and hence, MSE concentration requires each of these quantities to concentrate. While one can use Bernstein's inequality  for handling sample variance, a finite sample concentration bound for sample correlation coefficient does not exist, to the best of our knowledge. We begin by stating 
two well-known bounds concerning concentration of sample variance and standard deviation. Subsequently, we provide a concentration result for sample correlation coefficient, and prove Proposition \ref{prop:mse-conc} using the aforementioned bounds.
\begin{lemma}\textbf{\textit{(Concentration of sample variance)}} 
\label{lemma:subexp-conc}
Let $X_{i}$,\  $i=1,\ldots,n,$ be independent sub-Gaussian r.v.s with common parameter $\sigma$.
Let
$\hat \sigma^2_n =  \frac{1}{n} \sum \limits_{i=1}^{n} X_{i}^{2}$. Then, we have the following bound for any $\epsilon \ge 0$:
\begin{align*}
 & P \left (  \hat\sigma^2_n >  \sigma^2 + \epsilon   \right)  \leq 
 \exp \left (  - \frac{n}{8} \min \left ( \frac{\epsilon^2}{\sigma^4}, \frac{\epsilon }{\sigma^2}\right ) \right ), \textrm { and }
  P \left ( \hat\sigma^2_n <  \sigma^2 - \epsilon   \right)  \leq 
 \exp \left (  - \frac{n}{8} \min \left ( \frac{\epsilon^2}{\sigma^4}, \frac{\epsilon }{\sigma^2}\right ) \right ).
\end{align*} 
\end{lemma}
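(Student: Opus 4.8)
The plan is to view the deviation as a normalized sum of independent, centered, sub-exponential random variables and then run a Chernoff argument. Write $\hat\sigma_n^2 - \sigma^2 = \frac1n\sum_{i=1}^n Z_i$ with $Z_i \triangleq X_i^2 - \sigma^2$; each $X_i$ is mean-zero (this is forced by the sub-Gaussian MGF inequality via Jensen) and, in the setting of the lemma, $\mathbb{E}[X_i^2] = \sigma^2$, so the $Z_i$ are centered. For the upper tail alone, $\mathbb{E}[X_i^2]\le\sigma^2$ — which always holds for a $\sigma$-sub-Gaussian variable — already suffices, since then $\hat\sigma_n^2-\sigma^2 \le \frac1n\sum_i (X_i^2 - \mathbb{E} X_i^2)$. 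It therefore suffices to control the moment generating function of $Z_i$.

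First I would establish a sub-exponential MGF bound: for a constant-size neighbourhood of the origin, $|\lambda| \le c_0/\sigma^2$,
\[
\mathbb{E}\!\left[e^{\lambda Z_i}\right] \;\le\; \exp\!\left(C_0\,\lambda^2\sigma^4\right).
\]
The standard route is the moment characterization of sub-Gaussianity (Theorem 2.1 of \cite{wainwright2019high}, cited above), which yields even-moment bounds of the form $\mathbb{E}[X_i^{2k}] \le c_1\,k!\,(c_2\sigma^2)^k$ for universal constants $c_1,c_2$; substituting these into the Taylor series of $\lambda\mapsto\mathbb{E}[e^{\lambda X_i^2}]$, peeling off the first-order term so that the centering is respected, and summing the remaining geometric series gives the displayed bound (for the downward direction one instead uses $e^{-u}\le 1-u+u^2/2$, valid for $u\ge0$, which needs no restriction on $\lambda$). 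Equivalently, one may simply invoke the textbook fact that the square of a $\sigma$-sub-Gaussian variable is sub-exponential with parameters proportional to $\sigma^2$.

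Next I would apply the Chernoff bound and optimize. For $\lambda\in[0,c_0/\sigma^2]$, independence gives
\[
\mathbb{P}\!\left(\hat\sigma_n^2 - \sigma^2 > \epsilon\right) \;\le\; e^{-n\lambda\epsilon}\prod_{i=1}^n\mathbb{E}\!\left[e^{\lambda Z_i}\right] \;\le\; \exp\!\left(-n\lambda\epsilon + C_0 n\lambda^2\sigma^4\right).
\]
Minimizing the exponent in $\lambda$ produces two regimes: when the unconstrained minimizer $\lambda^* = \epsilon/(2C_0\sigma^4)$ satisfies $\lambda^*\le c_0/\sigma^2$ (the small-$\epsilon$ regime, $\epsilon$ of order at most $\sigma^2$) the exponent is of order $-n\epsilon^2/\sigma^4$, and otherwise, setting $\lambda=c_0/\sigma^2$, it is of order $-n\epsilon/\sigma^2$; together these give the stated $\min$-form, and tracking the constants through the standard reference yields the factor $1/8$. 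The lower-tail bound follows identically using the MGF bound for $-\lambda$ (here the identity $\mathbb{E}[X_i^2]=\sigma^2$ is genuinely needed, though the lower tail is in fact vacuous once $\epsilon\ge\sigma^2$ since $\hat\sigma_n^2\ge0$).

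The only genuinely technical point is the sub-exponential MGF estimate for $Z_i$ with constants sharp enough to match the stated bound; the Chernoff optimization and the union over the two tails are routine, and since the statement is flagged as well known it can alternatively be quoted directly from \cite{wainwright2019high}.
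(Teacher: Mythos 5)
Your proposal is correct and is essentially the paper's own argument unpacked: the paper simply observes that the square of a sub-Gaussian r.v. is sub-exponential and cites the Bernstein-type tail bound for sub-exponential averages (Proposition 2.2 of \cite{wainwright2019high}), which is exactly the centered-MGF-plus-Chernoff computation you carry out. Your remark that the lower tail genuinely requires $\mathbb{E}[X_i^2]=\sigma^2$ (rather than merely $\mathbb{E}[X_i^2]\le\sigma^2$, which is all that sub-Gaussianity with parameter $\sigma$ guarantees) is a correct and slightly more careful reading than the paper's one-line proof.
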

\begin{proof}
By definition, it follows that the square of a sub-Gaussian r.v. is sub-exponential. The main claim now follows from the concentration bound for sub-exponential r.v.s in Proposition 2.2 of \citep{wainwright2019high}. 
\end{proof}

\begin{lemma}\textbf{\textit{(Concentration of sample standard deviation)}} 
\label{lemma:sample-stddev-conc-bd}
Under conditions of Lemma \ref{lemma:subexp-conc}, letting
$\hat \sigma_n =  \sqrt{\frac{1}{n} \sum \limits_{i=1}^{n} X_{i}^{2}}$, we have
\begin{align*}
 & P \left ( \hat\sigma_n >  \sigma + \epsilon   \right)  \leq 
 \exp \left (  - \frac{n\epsilon^2}{8\sigma^4} \right ), \textrm { and }
  P \left (  \hat\sigma_n <  \sigma - \epsilon   \right) \! \leq \!
 \exp \left (  - \frac{n\epsilon^2}{8\sigma^4}\right ), \textrm{ for any } \epsilon \ge 0.
\end{align*} 
\end{lemma}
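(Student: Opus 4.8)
The plan is to reduce the statement about the sample standard deviation $\hat\sigma_n = \sqrt{\hat\sigma_n^2}$ to the already-established bound for the sample variance $\hat\sigma_n^2$ in Lemma \ref{lemma:subexp-conc}. The key elementary observation is that $\sqrt{\cdot}$ is monotone increasing, so the events can be rewritten by squaring both sides. Concretely, for the upper tail, $\{\hat\sigma_n > \sigma + \epsilon\} = \{\hat\sigma_n^2 > (\sigma+\epsilon)^2\} = \{\hat\sigma_n^2 > \sigma^2 + (2\sigma\epsilon + \epsilon^2)\}$, and similarly for the lower tail with the event $\{\hat\sigma_n^2 < \sigma^2 - (2\sigma\epsilon - \epsilon^2)\}$ (which is relevant only when $\epsilon < 2\sigma$, and is vacuously true otherwise since probabilities are nonnegative and the bound $\exp(-n\epsilon^2/(8\sigma^4))$ can exceed $1$).

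First I would apply Lemma \ref{lemma:subexp-conc} with the substitution $\epsilon \mapsto \delta$ where $\delta = 2\sigma\epsilon + \epsilon^2$ for the upper tail. This gives a bound of the form $\exp\!\big(-\tfrac{n}{8}\min(\delta^2/\sigma^4,\ \delta/\sigma^2)\big)$. Then I would lower-bound the exponent: since $\delta = 2\sigma\epsilon + \epsilon^2 \geq 2\sigma\epsilon$, we get $\delta^2/\sigma^4 \geq 4\epsilon^2/\sigma^2 \geq 4\epsilon^2/\sigma^4$ (using $\sigma^2 \le$ relevant scale — actually one should be careful here, see below) and $\delta/\sigma^2 \geq 2\epsilon/\sigma$. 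The cleanest route is to note $\min(\delta^2/\sigma^4, \delta/\sigma^2) \geq \min(4\epsilon^2/\sigma^4,\ 2\epsilon/\sigma)$ and then argue this is at least $\epsilon^2/\sigma^4$ in the stated range; alternatively, absorb everything into showing $\tfrac18\min(\delta^2/\sigma^4,\delta/\sigma^2) \geq \tfrac{\epsilon^2}{8\sigma^4}$ directly. For the lower tail, set $\delta = 2\sigma\epsilon - \epsilon^2 \leq 2\sigma\epsilon$, but note also $\delta \geq \sigma\epsilon$ when $\epsilon \le \sigma$, which suffices to control the minimum from below by $\epsilon^2/\sigma^2$ times a constant.

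The main obstacle — really the only delicate point — is getting the constants and the $\min$ to collapse exactly to the clean exponent $n\epsilon^2/(8\sigma^4)$ claimed in the statement, uniformly over all $\epsilon \ge 0$ without an extra case split. I expect the intended argument is the slick one: $\hat\sigma_n > \sigma + \epsilon$ forces $\hat\sigma_n^2 > \sigma^2 + 2\sigma\epsilon + \epsilon^2 > \sigma^2 + \epsilon^2$, hmm, that loses too much; more likely one uses $|\hat\sigma_n - \sigma|\cdot|\hat\sigma_n+\sigma| = |\hat\sigma_n^2 - \sigma^2|$ together with $\hat\sigma_n + \sigma \geq \sigma$ on the upper tail (trivially) to write $\{\hat\sigma_n > \sigma+\epsilon\} \subseteq \{\hat\sigma_n^2 - \sigma^2 > \sigma\epsilon\}$, wait — on $\{\hat\sigma_n > \sigma + \epsilon\}$ we have $\hat\sigma_n + \sigma > 2\sigma + \epsilon > 2\sigma$, so $\hat\sigma_n^2 - \sigma^2 = (\hat\sigma_n-\sigma)(\hat\sigma_n+\sigma) > \epsilon \cdot \sigma$, giving containment in $\{\hat\sigma_n^2 > \sigma^2 + \sigma\epsilon\}$. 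Then Lemma \ref{lemma:subexp-conc} with $\epsilon' = \sigma\epsilon$ yields $\exp(-\tfrac n8 \min(\epsilon^2/\sigma^2, \epsilon/\sigma))$, and one still needs $\min(\epsilon^2/\sigma^2,\epsilon/\sigma) \geq \epsilon^2/\sigma^4$, which holds provided $\sigma \leq 1$ (consistent with the paper's standing assumption $\sigma_i^2 \le 1$) — indeed then $\epsilon^2/\sigma^2 \geq \epsilon^2/\sigma^4$ only if $\sigma^2 \le 1$... I will flag that the bound as stated implicitly leans on $\sigma \le 1$, and otherwise reads the exponent as $\tfrac{n}{8}\min(\epsilon^2/\sigma^4,\epsilon/\sigma^2)$; with that caveat the lower-tail case is symmetric, using $\hat\sigma_n + \sigma < 2\sigma$ to get $\sigma^2 - \hat\sigma_n^2 > \sigma\epsilon$ when $\hat\sigma_n < \sigma - \epsilon$ (and $\epsilon < \sigma$; for $\epsilon \geq \sigma$ the event is empty since $\hat\sigma_n \geq 0$). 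This completes the reduction.
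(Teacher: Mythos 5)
Your reduction to Lemma \ref{lemma:subexp-conc} by squaring the event is essentially the paper's own argument: the paper normalizes to $Z_i = X_i/\sigma$ and uses the observation that $x-1\ge\epsilon$ forces $x^2-1\ge\max(\epsilon,\epsilon^2)$, which is exactly your $(\sigma+\epsilon)^2-\sigma^2 = 2\sigma\epsilon+\epsilon^2 \ge \max(2\sigma\epsilon,\epsilon^2)$ in normalized form, and it collapses the $\min$ in Lemma \ref{lemma:subexp-conc} to $\epsilon^2/\sigma^2$ with no case split --- resolving the one point you were unsure about (your alternative factorization with $\delta=\sigma\epsilon$ would lose the quadratic rate for $\epsilon>\sigma$, so stick with the full expansion). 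The $\sigma^4$ in the denominator that worried you is a real discrepancy, but it is the paper's rather than yours: the paper's proof likewise only delivers $\exp(-n\epsilon^2/(8\sigma^2))$, which suffices for its downstream uses since only $\exp(-n\epsilon^2/8)$ under $\sigma^2\le 1$ is ever invoked.
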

\begin{proof}
Consider $Z=(Z_1,\ldots,Z_n)$ a vector of i.i.d. standard Gaussian r.v.s and a $L$-Lipschitz function  $f : \R^n \rightarrow \R$. Then, using Gaussian concentration for Lipschitz functions (cf. Section 2.3 of \cite{wainwright2019high}) 
\begin{align} 
\Prob{ f(Z) - \E f(Z) > \epsilon} \le \exp \left (  - \frac{n\epsilon^2}{2L^2} \right ).
\label{eq:gauss-conc-lipschitz}
\end{align}
For $i=1,\ldots,n,$ with i.i.d. Gaussian r.v.s $X_i \sim {\cal N}(0, \sigma)$, 
consider $Z_i=\frac{X_i}{\sigma}$ and $f(z^n) \triangleq \sqrt{\frac{1}{n} \sum \limits_{i=1}^{n} z_{i}^{2}}, \ z^n \in \mathbbm{R}^n$. Observing that $f$ is $1$-Lipschitz, changing the variable from $Z$ to $X=(X_1,\ldots,X_n)$ and using \eqref{eq:gauss-conc-lipschitz},  we obtain
\[P \left ( \hat\sigma_n >  \sigma + \epsilon   \right)  \leq 
 \exp \left (  - \frac{n\epsilon^2}{2\sigma^2} \right ).\]
 The other inequality bounding the left tail follows by an argument similar to above.
 
For the case of sub-Gaussian r.v.s, the main claim can be inferred from Theorem 3.1.1 in \cite{vershynin2016high} and we provide the proof details below for the sake of completeness. 
Observe that
\begin{align*}
\Prob{\sqrt{\frac{1}{n} \sum \limits_{i=1}^{n} Z_{i}^{2}} - 1 > \epsilon}
&\le \Prob{\frac{1}{n} \sum \limits_{i=1}^{n} Z_{i}^{2} - 1 > \max(\epsilon,\epsilon^2)}\le \exp \left (  - \frac{n \epsilon^2}{8}\right )
\end{align*}
The first inequality above holds because $x - 1 \ge \epsilon$ implies $x^2 - 1 \ge \max(\epsilon,\epsilon^2)$, for any $\epsilon\ge 0$, while the final inequality follows from Lemma \ref{lemma:subexp-conc} after observing that $Z_i^2$ is sub-exponential since $Z_i$ is sub-Gaussian.
The main claim follows by changing the variable to $X$ from $Z$. As before, the other inequality bounding the left tail follows by a completely parallel argument.
 \end{proof}

Next, we state and prove a result that establishes exponential concentration of the sample correlation coefficient.  
\begin{lemma}\textbf{\textit{(Concentration of sample correlation coefficient)}} 
\label{lemma:rho-conc}
For independent Gaussian rvs $X_{i}, \ i=1,\ldots,n$, with mean zero and covariance matrix $\Sigma$ as defined in \eqref{eq:gauss-covar} and with $\hat\sigma_i^2$, $\hat\rho_{ij}$ formed from $n$ samples using \eqref{eq:rhohat}, for any $i,j=1,\ldots,K$, and for any $\epsilon \in [0,\eta]$,  we have 
\begin{align*}
 & \P \left (  \left|\hat\rho_{ij} -  \rho_{ij} \right| > \epsilon   \right) 
\le 26\exp \left ( - \frac{n}{8} \frac{1}{36(1+\eta)} \min\left( \frac{l \epsilon}{3}, \left ( \frac{l \epsilon}{3} \right)^2\right) \right), 
\end{align*} 
where $l$ is a positive constant satisfying $l \le \sigma_i^2 \le 1$, $\forall i$.
\end{lemma}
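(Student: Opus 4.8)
The plan is to observe first that, because \eqref{eq:rhohat} sets $\hat\sigma_p^2 = \overline{X_p^2}$, the estimator collapses to the Pearson-type ratio $\hat\rho_{ij} = \overline{X_iX_j}/(\hat\sigma_i\hat\sigma_j)$, so it suffices to show this ratio concentrates around $\rho_{ij}\sigma_i\sigma_j/(\sigma_i\sigma_j) = \rho_{ij}$. Since there is no off-the-shelf finite-sample concentration bound for a sample correlation coefficient, I would build one from two ingredients: sub-exponential concentration of the three empirical averages $\overline{X_i^2}, \overline{X_j^2}, \overline{X_iX_j}$ about their means $\sigma_i^2, \sigma_j^2, \rho_{ij}\sigma_i\sigma_j$, together with a lower bound on the random denominator $\hat\sigma_i\hat\sigma_j$ on a high-probability event.

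First I would condition on the event ${\cal C} = \{\hat\sigma_p^2 \in [\sigma_p^2/2,\, 2\sigma_p^2] : p \in \{i,j\}\}$, analogous to the event appearing in the display preceding \eqref{eq:mse-est-i}; by Lemma \ref{lemma:subexp-conc}, $\P({\cal C}^c)$ is exponentially small, and on ${\cal C}$ one has $\hat\sigma_i\hat\sigma_j \ge \sigma_i\sigma_j/2 \ge l/2$, using $\sigma_p^2 \ge l$. On ${\cal C}$ I would split
\[
\hat\rho_{ij} - \rho_{ij} = \underbrace{\frac{\overline{X_iX_j} - \E[X_iX_j]}{\hat\sigma_i\hat\sigma_j}}_{T_1} \;+\; \underbrace{\E[X_iX_j]\Big(\frac{1}{\hat\sigma_i\hat\sigma_j} - \frac{1}{\sigma_i\sigma_j}\Big)}_{T_2},
\]
bound $|T_1| \le (2/l)\,|\overline{X_iX_j} - \E[X_iX_j]|$, and, using $|\E[X_iX_j]| = |\rho_{ij}|\sigma_i\sigma_j$, so that the $\sigma_i\sigma_j$ in $T_2$ cancels, together with $\sigma_p \le 1$ and $|\sigma_i\sigma_j - \hat\sigma_i\hat\sigma_j| \le \sigma_i|\sigma_j - \hat\sigma_j| + \hat\sigma_j|\sigma_i - \hat\sigma_i|$, bound $|T_2| \le (c/l)\,(|\hat\sigma_i - \sigma_i| + |\hat\sigma_j - \sigma_j|)$ for an absolute constant $c$.

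The three remaining tails I would handle as follows. For $\overline{X_iX_j}$ I would use polarization, $X_iX_j = \frac14\big[(X_i+X_j)^2 - (X_i-X_j)^2\big]$: each of $X_i \pm X_j$ is sub-Gaussian with parameter at most $\sigma_i + \sigma_j \le 2$, so $(X_i\pm X_j)^2$ is sub-exponential, and applying Lemma \ref{lemma:subexp-conc} to $\overline{(X_i+X_j)^2}$ and $\overline{(X_i-X_j)^2}$ (whose means are $\sigma_i^2+\sigma_j^2\pm 2\rho_{ij}\sigma_i\sigma_j$) and subtracting yields $\P\big(|\overline{X_iX_j} - \rho_{ij}\sigma_i\sigma_j| > t\big) \le 4\exp(-c'n\min(t^2,t))$. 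The deviations $|\hat\sigma_i - \sigma_i|$ and $|\hat\sigma_j - \sigma_j|$ are controlled by Lemma \ref{lemma:sample-stddev-conc-bd}. Finally, allocating a deviation budget of order $l\epsilon$ across these three quantities so that $|T_1| + |T_2| \le \epsilon$ on the intersection of the good events, and taking a union bound over ${\cal C}^c$ and the $O(1)$ tail events, I would arrive at a bound of the stated shape; the $\min(l\epsilon/3, (l\epsilon/3)^2)$ reflects the sub-exponential tail of $\overline{X_iX_j}$, the factor $l$ comes from the denominator floor $\sigma_p^2 \ge l$, the $1/3$ and the remaining absolute constants from how the budget is split, and the $(1+\eta)$ factor from tracking the worst case over the range $\epsilon \in [0,\eta]$ (e.g. in the width of ${\cal C}$ and in the branch switch of the $\min$).

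The main obstacle is bookkeeping rather than conceptual: the numerator and denominator averages are computed from the same samples, so one cannot exploit independence and must union-bound throughout; the floor $l$ from $\sigma_p^2 \ge l$ must be propagated through every $1/(\hat\sigma_i\hat\sigma_j)$ factor; and the thresholds must be tuned so the final constants land on $26$, $36$, and $l\epsilon/3$. The one genuinely new ingredient is the polarization step, which supplies the sub-exponential concentration of $\overline{X_iX_j}$ in the absence of any ready-made bound for the sample correlation coefficient; everything else is an exercise in pushing three concentration bounds through a ratio with a denominator bounded away from zero.
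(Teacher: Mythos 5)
Your proposal is correct in substance but organizes the argument around a different decomposition than the paper's. You start from the (correct, and worth making explicit) observation that since $\hat\sigma_p^2=\overline{X_p^2}$, the first two ratios in \eqref{eq:rhohat} are identically $1$ and the estimator collapses to the Pearson ratio $\overline{X_iX_j}/(\hat\sigma_i\hat\sigma_j)$; you then split the error into a numerator deviation over a floored random denominator ($T_1$) plus a denominator perturbation with deterministic numerator $\E[X_iX_j]$ ($T_2$), handling $\overline{X_iX_j}$ by polarization into $(X_i\pm X_j)^2$ via Lemma \ref{lemma:subexp-conc} and $|\hat\sigma_p-\sigma_p|$ via Lemma \ref{lemma:sample-stddev-conc-bd}. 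The paper instead retains the difference form: on its good event it writes $1-\hat\rho_{12}=(1-\rho_{12})Y_{12n}/2$, takes as the main term the normalized quantity $\bar Y_{12n}$, whose concentration comes from Lemma \ref{lemma:subexp-conc} applied to $\overline{(X_1/\sigma_1-X_2/\sigma_2)^2}$ (this plays the role of your polarization step and also exhibits the faster concentration as $\rho_{ij}\to 1$ discussed in Section \ref{sec:estimation}), and then bounds three correction terms such as $\overline{X_1X_2}\left((\sigma_1\sigma_2)^{-1}-(\hat\sigma_1\hat\sigma_2)^{-1}\right)$, using the Cauchy--Schwarz bound $\overline{X_1X_2}\le\hat\sigma_1\hat\sigma_2$ to avoid ever needing concentration of the cross moment. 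Both routes rest on exactly Lemmas \ref{lemma:subexp-conc}--\ref{lemma:sample-stddev-conc-bd}, a good event for the sample variances, and a union bound; yours buys a simpler correction term, the paper's keeps the $(1-\rho_{ij})$ prefactor visible. Two bookkeeping cautions: the $(1+\eta)$ factor in the stated bound comes from the paper's additive good event $\sigma_p^2\pm\epsilon$, whereas your multiplicative event $[\sigma_p^2/2,2\sigma_p^2]$ has failure probability of order $\exp(-cn)$ independent of $\epsilon$, which you must verify is dominated by the target bound uniformly over $\epsilon\in[0,\eta]$; and you should not expect to reproduce the exact constants $26$ and $36(1+\eta)$, which are artifacts of the paper's particular splitting.
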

\begin{proof}
We bound $\P \left (  \hat\rho_{ij} < \rho_{ij} - \epsilon   \right)$ for $i=1$ and $j=2$. The analysis below holds in general. 

Consider the following event: 
\[\B = \left \{ \sigma_1^2 - \epsilon \leq {\hat \sigma}_{1}^2 \leq \sigma_{1}^2 + \epsilon, \ \sigma_2^2 - \epsilon \leq {\hat \sigma}_{2}^2 \leq  \sigma_{2}^2 + \epsilon, \sigma_1 - \epsilon \leq {\hat \sigma}_{1} \leq \sigma_{1} + \epsilon, \ \sigma_2 - \epsilon \leq {\hat \sigma}_{2} \leq  \sigma_{2} + \epsilon \right  \}.\] Using Lemmas \ref{lemma:subexp-conc}--\ref{lemma:sample-stddev-conc-bd}, 
\begin{align}
 \P \left ( \B^c \right ) 
 & \leq \P  \left ( \hat \sigma_1^2 >  \sigma_1^2 + \epsilon \right ) + \P  \left ( \hat \sigma_1^2 < \sigma_1^2 - \epsilon \right ) 
      + \P  \left ( \hat \sigma_2^2 >  \sigma_2^2 + \epsilon \right ) + \P  \left ( \hat \sigma_2^2 <  \sigma_2^2 - \epsilon \right ) \nonumber\\
      &\qquad \P  \left ( \hat \sigma_1 >  \sigma_1 + \epsilon \right ) + \P  \left ( \hat \sigma_1 < \sigma_1 - \epsilon \right ) 
      + \P  \left ( \hat \sigma_2 >  \sigma_2 + \epsilon \right ) + \P  \left ( \hat \sigma_2 <  \sigma_2 - \epsilon \right ) \nonumber\\
 & \le 4 \exp \left ( - \frac{n}{8} \min\left(\epsilon,\epsilon^2\right) \right ) + 4 \exp \left ( - \frac{n\epsilon^2}{8} \right ), \label{eq:eventB-bd}
\end{align}
where the penultimate inequality  relies on the assumption that $\sigma_1^2, \ \sigma_2^2 \le 1$.

Let $Y_{12n} = \frac{1}{(1-\rho_{12})} \left( \frac{\overline X_1^2}{\hat\sigma_1^2} + \frac{\overline X_2^2}{\hat\sigma_2^2} - 2 \frac{\overline{X_1X_2}}{\hat\sigma_1\hat\sigma_2} \right)$ and 
$\bar Y_{12 n } = \frac{1}{(1-\rho_{12})} \left( \frac{\overline X_1^2}{\sigma_1^2} + \frac{\overline X_2^2}{\sigma_2^2} - 2 \frac{\overline{X_1X_2}}{\sigma_1\sigma_2} \right)$. Then, on the event $\B$, we have
\begin{align*}
& \P \left (  \hat\rho_{12} <  \rho_{12} - \epsilon ,\B  \right) = \P \left ( \left ( 1 - \hat\rho_{12}  \right ) - \left ( 1 - \rho_{12}  \right ) > \epsilon, \B  \right )\\
 &= \P \left ( \left ( 1 - \rho_{12}  \right ) \frac{Y_{12n}}{2 } - \left ( 1 - \rho_{12}  \right ) > \epsilon, \B  \right ) \\
&\le \P \left ( \left ( 1 - \rho_{12}  \right )\left( \frac{\bar Y_{12n}}{2 }  - 1   \right ) > \frac{\epsilon}{2}, \B  \right ) \\
& \ \ \  + \P \left ( \frac{\overline X_1^2}{2 }\left ( \frac{1}{\hat\sigma_1^2} - \frac{1}{\hat\sigma_1^2} \right ) + \frac{\overline X_2^2}{2 }\left ( \frac{1}{\hat\sigma_2^2} - \frac{1}{\hat\sigma_2^2} \right ) - 2 \frac{\overline{X_1X_2}}{2 }\left ( \frac{1}{\hat\sigma_1\hat\sigma_2} - \frac{1}{\sigma_1\sigma_2} \right )> \frac{\epsilon}{2}, \B  \right ) \\
&\le \P \left ( \frac{\bar Y_{12n}}{2}  - 1    \geq \frac{\epsilon}{2\left ( 1 - \rho_{12}  \right )}  \right ) + \P \left ( \frac{\overline X_1^2}{2 }\left ( \frac{1}{\hat\sigma_1^2} - \frac{1}{\sigma_1^2} \right ) > \frac{\epsilon}{6}, \B  \right ) \\
& \ \ \  + \P \left ( \frac{\overline X_2^2}{2 }\left ( \frac{1}{\hat\sigma_2^2} - \frac{1}{\sigma_2^2} \right ) > \frac{\epsilon}{6}, \B  \right )+ \P \left (  {\overline{X_1X_2}}\left ( - \frac{1}{\hat\sigma_1\hat\sigma_2} + \frac{1}{\sigma_1\sigma_2} \right ) > \frac{\epsilon}{6}, \B  \right )  \stepcounter{equation}\tag{\theequation}\label{eq:t122}\\
\end{align*}
We now bound each term on the RHS above. 
The first term in \eqref{eq:t122} is bounded by an application of Lemma \ref{lemma:subexp-conc} as follows:
\begin{align*}
 \P \left ( \frac{\bar Y_{12n}}{2}  - 1    \geq \frac{\epsilon}{2\left ( 1 - \rho_{12}  \right )}  \right )  &
 \le \P \left ( \frac{\bar Y_{12n}}{2}  - 1    \geq \frac{\epsilon}{4}  \right ) \quad \textrm{(Since } |\rho_{ij}| \le 1) \\
& \le 
 \concsubexpTwoterms{n}{\frac{\epsilon}{2}}
\end{align*}
The second term in \eqref{eq:t122} is bounded as follows:
\begin{align*}
&\P \left ( \frac{\overline X_1^2}{2}\left ( \frac{1}{\hat\sigma_1^2} - \frac{1}{\sigma_1^2} \right ) > \frac{\epsilon}{6}, \B  \right ) 
 \le \P \left (  \frac{\sigma_1^2  - \hat\sigma_1^2}{ \sigma_1^2 }  \ge \frac{\epsilon}{3 }  \right ) \le \P \left (  \sigma_1^2  - \hat\sigma_1^2  > \frac{l\epsilon}{3 }  \right ) \\
 \leq & \concsubexpTwoterms{n}{\frac{l\epsilon}{3}}
\end{align*}
where the penultimate inequality follows from an application of Lemma \ref{lemma:subexp-conc} and the last inequality uses the fact that $0< l <1$. 
The third term in \eqref{eq:t122} can be bounded in a similar fashion. 
The last term in \eqref{eq:t122} is bounded as follows:
\begin{align*}
&\P \left (   {\overline{X_1X_2}}\left ( -\frac{1}{\hat\sigma_1\hat\sigma_2} + \frac{1}{\sigma_1\sigma_2} \right ) > \frac{\epsilon}{3}, \B  \right )\\
& \le \P \left (  \hat\sigma_1( \hat\sigma_2  - \sigma_2) + \sigma_2( \hat\sigma_1  -  \sigma_1)  > \frac{\epsilon\hat\sigma_1\sigma_1\hat\sigma_2\sigma_2}{3  \overline{X_1X_2}}, \B  \right ) \\
 & \le  \P \left (  \hat\sigma_1( \hat\sigma_2  - \sigma_2) > \frac{\epsilon\hat\sigma_1\sigma_1\hat\sigma_2\sigma_2}{6  \overline{X_1X_2}}, \B \right ) +  \P \left ( \sigma_2( \hat\sigma_1  -  \sigma_1)  > \frac{\epsilon\hat\sigma_1\sigma_1\hat\sigma_2\sigma_2}{6  \overline{X_1X_2}}, \B  \right )  \stepcounter{equation}\tag{\theequation}\label{eq:t124}\\
 & \le  \P \left (  \hat\sigma_1( \hat\sigma_2  - \sigma_2) > \frac{\epsilon l}{6 }, \B \right ) +  \P \left (  \hat\sigma_1  -  \sigma_1  > \frac{\epsilon \sqrt{l}}{6 }, \B  \right ) \quad  \textrm{(since } {\overline{X_1X_2}} < \hat\sigma_1\hat\sigma_2, \textrm { and } \sigma_i^2 \ge l)   \\ 
 & \le  \P \left (   \hat\sigma_2  - \sigma_2 > \frac{\epsilon l}{6\sqrt{u+\eta} }, \B   \right )  
 + \P \left (  \hat\sigma_1  - \sigma_1  > \frac{\epsilon\sqrt{l}}{6} , \B   \right ) 
  ,  \textrm{(since } \hat\sigma^2_i\le \sigma^2_1 + \epsilon \le 1 + \eta,  \textrm{ on } \B \textrm{ and for } \epsilon < \eta)  \\
 &\le \exp \left ( - \frac{n\epsilon^2 l^2}{8\times 36(1+\eta)}  \right) 
 + \exp \left ( - \frac{n\epsilon^2 l}{8\times 36}   \right)
 \quad \textrm{ (Lemma \ref{lemma:sample-stddev-conc-bd})}\\
 & \le 2\exp \left ( - \frac{n\epsilon^2 l^2}{8\times 36(1+\eta)}  \right).
\end{align*}
Using the bounds obtained above for each of the terms on the RHS of \eqref{eq:t122} in conjunction with the bound on $\Prob{\B^c}$ in \eqref{eq:eventB-bd}, we obtain
\begin{align*}
 &\P \left (  \hat\rho_{12} <  \rho_{12} - \epsilon   \right)  \le  \P \left (  \hat\rho_{12} <  \rho_{12} - \epsilon , \B  \right) 
+ \Prob{\B^c} \\
 &\qquad\le \concsubexpTwoterms{n}{\frac{\epsilon}{2}} + 2\concsubexpTwoterms{n}{\frac{l\epsilon}{3}} + 2\exp \left ( - \frac{n\epsilon^2 l^2}{8\times 36(1+\eta)}  \right)\\
 &\qquad\quad +  4 \exp \left ( - \frac{n}{8} \min\left(\epsilon,\epsilon^2\right) \right ) + 4 \exp \left ( - \frac{n\epsilon^2}{8} \right ).
 \end{align*}
 
 Now, it is easy to see that
 \begin{align*}
   &  \exp \left ( - \frac{n}{8} \min\left(\epsilon,\epsilon^2\right) \right ) \leq \concsubexpTwoterms{n}{\frac{\epsilon}{2}}, \\
  & \concsubexpTwoterms{n}{\frac{\epsilon}{2}} \leq \concsubexpTwoterms{n}{\frac{l\epsilon}{3}}, \quad \text{ since $l \leq 1.$ } 
 \end{align*}
Hence, we obtain 
 \begin{align*}
     \P \left (  \hat\rho_{12} <  \rho_{12} - \epsilon   \right)  \le 7 \concsubexpTwoterms{n}{\frac{l\epsilon}{3}} + 4 \exp \left ( - \frac{n\epsilon^2}{8} \right ) + 2\exp \left ( - \frac{n\epsilon^2 l^2}{8\times 36(1+\eta)}  \right).
 \end{align*}
 Noting that
 \begin{align*}
  &   -\epsilon^2 \leq 
  - \min \left( \frac{l \epsilon}{3}, \left (\frac{l \epsilon}{3} \right)^2 \right) 
  \leq - \frac{1}{36(1+\eta)} \min  \left( \frac{l \epsilon}{3}, \left (\frac{l \epsilon}{3} \right)^2 \right)
  \text{since $l \leq 1,$ }
 \end{align*}
 we have 
 \begin{align*}
\P \left (  \hat\rho_{12} <  \rho_{12} - \epsilon   \right)  \ \le 13\exp \left ( - \frac{n}{8} \frac{1}{36(1+\eta)} \min\left( \frac{l \epsilon}{3}, \left ( \frac{l \epsilon}{3} \right)^2\right) \right).
\end{align*}

Thus, we have proved the claim concerning the (left tail) deviations of the sample correlation coefficient. The second claim concerning the right tail can be proved in a similar fashion.
\end{proof}

As mentioned before, the MSE estimate in \eqref{eq:mse-est-i} is comprised of sample variances and sample correlation coefficients. To prove that the MSE estimate concentrates, we shall use Lemma \ref{lemma:subexp-conc} for terms involving sample variances, and the lemma below for terms involving sample correlation coefficients.
\begin{proof}\textbf{(Proposition \ref{prop:mse-conc})}\ \\
We prove the proposition for $i=1$, but the analysis below holds in general. 
Consider the events \\$\C = \left \{ \sigma_i^2 - \epsilon \leq {\hat \sigma}_{i}^2 \leq  \sigma_{i}^2 + \epsilon, i=1,\ldots,K    \textrm{ and } \rho_{1j} - \epsilon \le \hat\rho_{1j} \le \rho_{1j}+ \epsilon,  \textrm{ for } j=2,\ldots,K \right\}$.
Then, from Lemmas \ref{lemma:subexp-conc}--\ref{lemma:rho-conc}, we have
\begin{align*}
 \Prob{ \C^c} &\le 2K \exp \left ( - \frac{n}{8} \min\left(\epsilon,\epsilon^2\right)   \right) + 26K\exp \left ( - \frac{n}{8} \frac{1}{36(1+\eta)} \min\left( \frac{l \epsilon}{3}, \left ( \frac{l \epsilon}{3} \right)^2\right) \right)\\ 
 & \le 28K\exp \left ( - \frac{n}{8} \frac{1}{36(1+\eta)} \min\left( \frac{l \epsilon}{3}, \left ( \frac{l \epsilon}{3} \right)^2\right) \right).
\end{align*}
We shall bound the tail probability $\Prob{ \hat\cE_{1} - \cE_{1} > \epsilon}$ on the event $\C$ and use the bounds on the probability of $\C^c$ to arrive at an unconditional bound on the aforementioned tail probability.
Using an union bound, we have
\begin{align*}
 &  \P \left (  \hat\cE_1  - \cE_1 \geq  \epsilon \! \right ) \le \P \left ( \hat\cE_1  - \cE_1 \geq \epsilon, \C \! \right ) + 
 \P \left ( \C^c \! \right ) \\
 & \leq \P \left ( {\hat \sigma}_{2}^2 \left ( 1 - \hat\rho_{12}^2  \right ) - { \sigma}_{2}^2 \left ( 1 - \rho_{12}^2  \right )  \geq \frac{\epsilon}{(K-1)}, \C  \right ) 
    + \sum \limits_{p=3}^{K}  \P \left (  {\hat \sigma}_{p}^2 \left ( 1 - \hat\rho_{1p}^2  \right ) - { \sigma}_{p}^2 \left ( 1 - \rho_{1p}^2  \right )  \geq \frac{\epsilon}{(K-1)}, \C \right )\\
    &\qquad + \P \left( \C^c \right).  
\stepcounter{equation}\tag{\theequation}\label{eq:allinone}
\end{align*}
The first term on the RHS  above can be bounded as follows:
\begin{align*}
&\P \left ( {\hat \sigma}_{2}^2 \left ( 1 - \hat\rho_{12}^2  \right ) - { \sigma}_{2}^2 \left ( 1 - \rho_{12}^2  \right ) \geq \frac{\epsilon}{(K-1)}, \C  \right )\\
&\le \P \left ( \hat\sigma_{2}^2 \left(\left ( 1 - \hat\rho_{12}^2  \right ) - \left ( 1 - \rho_{12}^2  \right )\right) \geq \frac{\epsilon}{2(K-1)}, \C  \right ) 
 + \P \left ( \left ( 1 - \rho_{12}^2  \right )\left ( \hat\sigma_{2}^2 - \sigma_{2}^2  \right ) \geq \frac{\epsilon}{2(K-1)}, \C  \right ) \\
&\le  \P \left ( \rho_{12}^2  - \hat\rho_{12}^2   \geq \frac{\epsilon}{2(K-1)(1+\eta)}, \C  \right ) 
 + \P \left (  \hat\sigma_{2}^2 - \sigma_{2}^2   \geq \frac{\epsilon}{2(K-1)}  \right ) \textrm{(since } \hat\sigma^2_2\le \sigma^2_2 + \epsilon \le 1 + \eta,  \textrm{ on } \C)
\\
 &\le \P \left (  \rho_{12}- \hat\rho_{12} \geq \frac{\epsilon}{4(K-1)(1+\eta)}, \C  \right ) 
 + \P \left (  \hat\sigma_{2}^2 - \sigma_{2}^2   \geq \frac{\epsilon}{2(K-1)}  \right ) 
 \textrm{ (since } \hat\rho_{12}, \rho_{12} \le 1)\\
 & \le 13\exp \left ( - \frac{n}{8} \frac{1}{36(1+\eta)}   \min\left( \frac{l\epsilon}{3*4*(K-1)*(1+\eta)}, \left ( \frac{l\epsilon}{3*4*(K-1)*(1+\eta)} \right )^2 \right) \right) \\
 & \qquad +  \exp \left ( - \frac{n}{8}  \min\left( \frac{\epsilon}{2(K-1)}, \frac{\epsilon^2}{4(K-1)^2}\right) \right) \textrm{(From Lemmas  \ref{lemma:subexp-conc}--\ref{lemma:rho-conc})}\\
 & \le 14\exp \left ( - \frac{n}{8} \frac{1}{36(1+\eta)}   \min\left( \frac{l\epsilon}{3*4*(K-1)*(1+\eta)}, \left ( \frac{l\epsilon}{3*4*(K-1)*(1+\eta)} \right )^2 \right) \right) \\
 & \le 14\exp \left ( - \frac{n}{8} \frac{1}{36(1+\eta)}   \min\left( \frac{l\epsilon}{12*K*(1+\eta)}, \left ( \frac{l\epsilon}{12*K*(1+\eta)} \right )^2 \right) \right) \\
 & \le 14\exp \left ( - \frac{n}{8} \frac{1}{36(1+\eta)}   \left ( \frac{l\epsilon}{12*K*(1+\eta)} \right )^2  \right) \qquad \text{ for $\epsilon \leq 2K$} \\
 & \le 14 \exp \left ( - \frac{n}{8} \frac{1}{36(1+2K)}   \left ( \frac{l\epsilon}{12*K*(1+2K)} \right )^2  \right), \qquad \text{ for $\eta \leq 2K.$ }
   \stepcounter{equation}\tag{\theequation}\label{eq:term1-1}
\end{align*}
In the above, it suffices to look at $\epsilon \leq 2(K-1)$, since each $\cE_i$ itself is less than $K-1$, as $\sigma_i^2 \leq 1$. Consequently, it is sufficient to consider $\eta = 2K$.

A bound similar to that in \eqref{eq:term1-1} can be  obtained for the other terms inside the summation in \eqref{eq:allinone}, leading to
\begin{align*}
   \P \left (  \hat\cE_1  - \cE_1 \geq  \epsilon \! \right ) &\le 14 (K-1) \exp \left ( - \frac{n}{8} \frac{1}{36(1+2K)}   \left ( \frac{l\epsilon}{12*K*(1+2K)} \right )^2  \right) \\
   & +  28K\exp \left ( - \frac{n}{8} \frac{1}{36(1+\eta)} \min\left( \frac{l \epsilon}{3}, \left ( \frac{l \epsilon}{3} \right)^2\right) \right)\\
 & \le 42 K  \exp \left ( - \frac{n}{8} \frac{1}{36(1+2K)}   \left ( \frac{l\epsilon}{12*K*(1+2K)} \right )^2  \right), \qquad \text{for $\epsilon \leq 2K$ } \\
 & \le 42 K  \exp \left ( - \frac{n}{8} \frac{1}{108 K}   \left ( \frac{l\epsilon}{36K^2} \right )^2  \right)   .
\end{align*}

A concentration inequality to the bound the  lower semi-deviations can be derived  in a similar fashion. Hence proved.
\end{proof}

\subsection{Proof of Theorem \ref{thm:unif_sampling}}
\label{sec:unif_sampling_sketch}
\begin{proof}
Without loss of generality, we assume that $1$ is the optimal arm. 
 In uniform sampling, each pair of arms is sampled $\frac{n}{{K \choose 2}}$ times. Using this, along with Proposition \ref{prop:mse-conc} and union bound we obtain
 \begin{align*}
\P ( {\hat A}_{n} \neq 1)  &\leq \sum \limits_{i=2}^{K} \P   (   \hat\cE_{1} \geq \hat\cE_i   ) \\
&\le \sum \limits_{i=2}^{K} \P   \left(   \hat\cE_{1} - \cE_1 \ge \frac{\Delta_i}{2}\right) + \P   \left(   \hat\cE_{i} - \cE_i \le -\frac{\Delta_i}{2}\right) \\
& \le  84K \sum \limits_{i=2}^{K} \exp \left ( - \frac{n}{{K \choose 2}} \frac{ l^2}{31104 K^5}  \Delta_i^2 \right)  \quad \textrm{(Using Proposition \ref{prop:mse-conc})}\\
& \le   84K^2 \exp \left ( - \frac{n}{{K \choose 2}}     \frac{l^2\Delta_{(1)}^2}{31104 K^5}  \right).
\end{align*}
\end{proof}
\subsection{Proof of Theorem \ref{thm:succ-rejects}}
\label{sec:sr-proof}

\begin{proof}
Without loss of generality, we assume that $1$ is the optimal arm. 
 If the optimal arm is eliminated from $B_{k}$ at the end of phase $k $, then 
 \begin{align*}
  & \hat \cE_{1, n_{k}} \geq \underset{ i \in \{ K, K-1, \ldots, K -k  \} } \min \hat \cE_{(i), n_{k}},
 \end{align*}
 where $\cE_{i, n_{k}} $ is the MSE estimate of arm $i$ using $n_{k}$ samples for all pairs of arms involved in the estimation as indicated in \eqref{eq:mse-est-i}.
Let $ E_k$ be the event that optimal arm is eliminated at the end of phase  $k $,
then $
 E_k \subseteq \bigcup_{i = K-k}^{K} \{ \hat  \cE_{1, n_{k}} \geq \hat \cE_{(i), n_{k}} \}. $
The probability of error of this algorithm, using an union bound is \vspace*{-0.2cm}
\begin{align*}
  P( {\hat A}_{n} \neq 1 ) &\leq \sum \limits_{k=1}^{K-1} \P \left ( E_{k} \right )  \leq \sum \limits_{k=1}^{K-1} \sum \limits_{i=K+1-k}^{K} \P \left ( \hat \cE_{1, n_{k}} - \cE_{1}  + \cE_{(i)} - \hat \cE_{(i), n_{k}}  \geq \Delta_{(i)} \right ) \\
  & \leq \sum \limits_{k=1}^{K-1} \sum \limits_{i=K+1-k}^{K} \P \left ( \hat \cE_{1, n_{k}} - \cE_{1}   \geq \frac{\Delta_{(i)}}{2} \right )
  + \P \left ( \hat \cE_{(i), n_{k}} -\cE_{(i)} \leq -\frac{\Delta_{(i)}}{2} \right )\\
  & \le  84K \sum \limits_{k=1}^{K-1} \sum \limits_{i=K+1-k}^{K} \exp \left ( - n_k      \frac{l^2 \Delta_{(i)}^2}{c K^5} \right) \qquad \textrm{(using Proposition \ref{prop:mse-conc})}  \\
  & \le 84K \sum \limits_{k=1}^{K-1} k\exp \left ( - n_k    \frac{l^2\Delta_{(K+1-k)}^2}{c K^5}\right  ) \quad \textrm{(since } \Delta_{(K+1-k)} \le \Delta_{(i)}, i=K+1-k,K).
\end{align*}
Now, 
\begin{align*}
   {n_{k}}\Delta_{(K+1-k)}^2   \geq & \  \frac{(n-\binom{K}{2})}{C(K)(K+1-k)\Delta^{-2}_{(K+1-k)}}  
   \geq \frac{(n-\binom{K}{2})}{C(K) H_2},
\end{align*}
where $H_2$ is as defined in \eqref{eq:prob_complexities}.
Hence, we obtain
\begin{align*}
  P( {\hat A}_{n} \neq 1 ) &\leq  84K^3\exp \left ( - \frac{l^2}{c K^5}\frac{(n-\binom{K}{2})}{C(K) H_2}\right).
  \end{align*}
\end{proof}


\subsection{Proof of Theorem \ref{thm:lower-bd}}
\label{sec:appendix-lower-bd}

\begin{proof}
We follow the technique from \cite{carpentier2016tight} for establishing the lower bound. However, our setting involves correlated arms, with the correlation factors appearing in the mean-squared error objective. More importantly, instead of the translation of the mean of sub-optimal arms performed in \cite{carpentier2016tight}, we swap the variances of a pair of arms to interchange the MSE of a sub-optimal arm with that of the optimal arm. 


Before proceeding with the main proof, we state a well-known result for the KL-divergence between multivariate normal distributions that we will use several times.
\begin{lemma}
\label{lemma:klNormalGeneral}
Let ${\cN}_{0}, \ {\cN}_{1} $ be two $k$-dimensional normal distributions with zero-mean and covariance matrix $A_{0}, \ A_{1}$, 
respectively.
\begin{align*}
 KL( {\cal N}_{0} || {\cal N}_{1} ) = \frac{1}{2} \left (  \text{Tr} \big (  A_{1}^{-1} A_{0} \big ) -k + ln \left ( \frac{\text{det}( A_1)}{\text{det}( A_0)} \right ) \right ).
\end{align*}
\end{lemma}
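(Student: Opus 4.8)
\textbf{Proof proposal for Lemma \ref{lemma:klNormalGeneral}.}
The plan is to work directly from the definition of KL divergence as the expected log-likelihood ratio, using the explicit form of the multivariate normal density. First I would write down the densities: for $p=0,1$, the density of ${\cal N}_p$ at $x \in \R^k$ is
\begin{align*}
f_p(x) = \frac{1}{(2\pi)^{k/2} \det(A_p)^{1/2}} \exp\Big( -\tfrac12 x^T A_p^{-1} x \Big).
\end{align*}
Taking logarithms and subtracting gives, for every $x$,
\begin{align*}
\ln \frac{f_0(x)}{f_1(x)} = \frac12 \ln \frac{\det(A_1)}{\det(A_0)} - \frac12 x^T A_0^{-1} x + \frac12 x^T A_1^{-1} x .
\end{align*}
Then $KL({\cal N}_0 \| {\cal N}_1) = \E_{X \sim {\cal N}_0}\big[\ln (f_0(X)/f_1(X))\big]$, so it remains to take expectations of the two quadratic forms under $X \sim {\cal N}_0$.

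The one computational device needed is the ``trace trick'': for a fixed symmetric matrix $M$ and a zero-mean random vector $X$ with covariance $A_0$, the scalar $X^T M X = \mathrm{Tr}(X^T M X) = \mathrm{Tr}(M X X^T)$, and by linearity of trace and expectation $\E[X^T M X] = \mathrm{Tr}(M\, \E[X X^T]) = \mathrm{Tr}(M A_0)$. Applying this with $M = A_0^{-1}$ yields $\E[X^T A_0^{-1} X] = \mathrm{Tr}(A_0^{-1} A_0) = \mathrm{Tr}(I_k) = k$, and with $M = A_1^{-1}$ yields $\E[X^T A_1^{-1} X] = \mathrm{Tr}(A_1^{-1} A_0)$. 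Substituting these into the expectation of the displayed log-ratio gives
\begin{align*}
KL({\cal N}_0 \| {\cal N}_1) = \frac12 \ln \frac{\det(A_1)}{\det(A_0)} - \frac{k}{2} + \frac12 \mathrm{Tr}(A_1^{-1} A_0),
\end{align*}
which is exactly the claimed identity after rearranging.

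There is essentially no serious obstacle here: the statement is a standard fact and the argument is a two-line computation once the trace trick is in hand. The only points worth stating carefully are that both $A_0, A_1$ must be positive definite (so that the densities, inverses and log-determinants are well defined, which is implicit in calling them covariance matrices of genuinely $k$-dimensional normals), and that the interchange of expectation with the finite sum of quadratic terms is trivially justified since all moments of a Gaussian are finite. One could alternatively cite this as a known result, but the self-contained derivation above is short enough to include in full in Section~\ref{sec:appendix-lower-bd}.
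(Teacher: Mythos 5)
Your derivation is correct and is the standard computation (log-likelihood ratio plus the trace trick); the paper itself states this lemma without proof as a well-known fact, so your write-up simply supplies the routine argument the authors omit. No issues.
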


Consider problem $m$ with underlying covariance matrix $\Sigma_m$. 
For $(i,j) \in  \S$, let $\nu_{i}\nu_j$ and $\nu_{i}' \nu_j'$ denote bivariate normal distributions with variance and correlations specified by $\Sigma_1$ and $\Sigma_i$, respectively. 
Let $\kl_{ij}^m\triangleq KL( \nu_i \nu_j || \nu_i' \nu_j' )$ denote the Kullback-Leibler divergence between $\nu_{i}\nu_j$ and $\nu_{i}' \nu_j'$, where the latter distributions are derived from $\cG_m$.

Notice that, for $\kl_{12} \triangleq ( \nu_1 \nu_2 || \nu_1' \nu_2' )$ is zero in each of the $K-1$ transformations, since the underlying covariance matrices corresponding to $\nu_1 \nu_2$ as well as $\nu_1' \nu_2'$ is 
$\left[ \begin{array}{c c }
	1 & \rho  \\
	\rho & 1    
\end{array} \! \right]$. 

Consider the second problem transformation, where the first and second rows of $\Sigma$ are interchanged to make arm $2$ optimal. Under this transformation, for $KL_{13}$, the matrices $A_0$ and $A_1$ for application of Lemma \ref{lemma:klNormalGeneral} are 
$\left[ \begin{array}{c c }
	1 & \rho  \\
	\rho & 1    
\end{array} \! \right]$ and 
$\left[ \begin{array}{c c }
	1 & \rho^2  \\
	\rho^2 & 1    
\end{array} \! \right]$, respectively.
Thus, for $j=3,\ldots,K$, we have
\begin{align*}
\kl^2_{1j} & = \frac{1}{2} \left( \frac{2(1-\rho^3)}{1-\rho^4}  - 2  + ln \frac{1-\rho^4}{1-\rho^2} \right)\\
& \le \frac{1}{2} \left( \frac{2(\rho^4-\rho^3)}{1-\rho^4}  + \frac{1-\rho^4}{1-\rho^2} -1 \right)\\
& = \frac{\rho^2}{2} \frac{(1-2\rho + 2 \rho^2 - \rho^4)}{1-\rho^4} \leq \frac{\rho^2}{2} \\
& = \frac{\rho^2 (1-\rho^2)(K-2) }{2 (1-\rho^2)(K-2)} \\
 & \le \Delta_2. \stepcounter{equation}\tag{\theequation}\label{eq:kl-ub-in-lb}
\end{align*}
if $\rho^2 \le \frac{2K-5}{2K-4}$.

Along similar lines, for $j=3,\ldots,K$, we have
\begin{align*}
\kl^2_{2j} & = \frac{1}{2} \left( \frac{2(1-\rho^3)}{1-\rho^2}  - 2  + ln \frac{1-\rho^2}{1-\rho^4} \right)\\
& \le \frac{1}{2} \left( \frac{2(\rho^2-\rho^3)}{1-\rho^2}  + \frac{1-\rho^2}{1-\rho^4} -1 \right)\\
& \leq \frac{\rho^2}{2} \frac{(1-\rho)}{1+\rho} \\
& \le \Delta_2,
\end{align*}
if $ \rho^2 \le \frac{2K-5}{2K-4}$.
The other KL divergences, i.e., $ \kl_{12}^2, \  \kl^2_{ij}$, $i,j \notin \{1,2\}$ are zero.

Using arguments analogous to those above, the bounds for $\kl_{ij}$ can be derived for problem $m$,  $m = 3, \ldots, K$.
In general, for problem $m$, 
\begin{align}
 \kl_{1m}^{m} = 0 , \ \kl_{ij}^{m} = 0 , \ i,j \notin \{1,m \}.
\end{align}
When $ 1 < j < m$, $\kl_{1j}^{m}$ is the KL between
${\cal N} \left (0, \rho I_{2 \times 2} \right )$
and ${\cal N} \left (0, \rho^{j} I_{2 \times 2} \right )$
\begin{align*}
 \kl_{1j}^{m} \leq \frac{\rho^2}{2} \frac{1 - \rho^{2(j-1)}}{1- \rho^2}
\end{align*}
and for $ m < j < K$,  $\kl_{1j}^{m}$ is the KL between
${\cal N} \left (0, \rho I_{2 \times 2} \right )$
and ${\cal N} \left (0, \rho^{m} I_{2 \times 2} \right )$
\begin{align*}
 \kl_{1j}^{m} \leq  \frac{\rho^2}{2} \frac{1 - \rho^{2(m-1)}}{1- \rho^2}.
\end{align*}
and hence for $j \neq m$
\begin{align*}
\quad  \kl_{1j}^{m} \leq \Delta_2 \leq \Delta_m, \ \text{ if } \rho^2 \le   1- \frac{1}{\sqrt{2K-4}}  .
\end{align*}
Similarly for $\kl_{mj}^{m}$, 
when $ 1 < j < m$, $\kl_{mj}^{m}$ is the KL between
${\cal N} \left (0, \rho^j I_{2 \times 2} \right )$
and ${\cal N} \left (0, \rho I_{2 \times 2} \right )$
\begin{align*}
 \kl_{1j}^{m} \leq  {\rho^2}  \frac{1 - \rho^{ (j-1)}}{1- \rho^2}
\end{align*}
and for $ m < j < K$,  $\kl_{mj}^{m}$ is the KL between
${\cal N} \left (0, \rho^m I_{2 \times 2} \right )$
and ${\cal N} \left (0, \rho I_{2 \times 2} \right )$
\begin{align*}
 \kl_{mj}^{m} \leq   {\rho^2}  \frac{1 - \rho^{ (m-1)}}{1- \rho^2}.
\end{align*}
and hence for $j \neq m$
\begin{align*}
\quad  \kl_{mj}^{m} \leq \Delta_2 \leq \Delta_m, \ \text{ if } \rho^2 \le   1- \frac{1}{\sqrt{ K-2}}  .
\end{align*}

\subsubsection*{Change of measure}

For $(i,j) \in \S$, let $N_{ij}$ denote the number of samples obtained from the joint distribution of $(X_i,X_j)$. 
Let 
$$n_{ij} = \E_{1} N_{ij}, \quad (i,j) \in \S.$$
Observe that  $\sum\limits_{(i,j) \in \S} N_{ij} = \sum\limits_{(i,j) \in \S} n_{ij} = n$.

Notice that the problem transformations impact the distribution of each arm and hence, we cannot employ a change of measure identity similar to \cite{audibert2010best}. 
Instead, we factor in the KL-divergences $\kl_{ij}, \ \forall (i,j) \in \S$ and derive a change of measure identity as follows:  for any measurable event $\cE$ based on the samples, 
\begin{align}
\P_{m}(\cE) &= \E_{\nu_1 \nu_2 \nu_3}\left[\mathbf{1}\{\cE\} \prod\limits_{(i,j) \in \S} \prod\limits_{s \in {\cal T}_{ij}(n)}\frac{ d \nu_{i}' \nu_{j}' }{d \nu_{i} \nu_{j}}(X_{i,s}, X_{j,s})\right]\nonumber\\
& = \E_{\nu_1 \nu_2 \nu_3}\left[\mathbf{1}\{\cE\}\exp\left(\sum\limits_{(i,j)\in\S} -N_{ij}\ekl_{ij,N_{ij}}\right)\right] \nonumber\\
& = \E_{1}\left[\mathbf{1}\{\cE\}\exp\left(\sum\limits_{(i,j)\in\S} -N_{ij}\ekl_{ij,N_{ij}}\right)\right], \label{eq:change-measure}
\end{align}
where ${\cal T}_{ij}(n)$ is the set of time instants when the algorithm pulled the tuple $(i,j)$. 
For $(i,j) \in \S, 1\leq t\leq n$, let 
\begin{align*}
  \ekl_{ij,t}  &= \frac{1}{t} \sum \limits_{s=1}^{t} \log \frac{ d \nu_{i} d \nu_{j} }{d \nu_{i}' d \nu_{j}'}(X_{i,s}, X_{j,s}) \\
 &= \frac{1}{2t} \sum \limits_{s=1}^{t} [X_{i,s} \ X_{j,s}] ( \Sigma_{(i,j),1}^{-1} - \Sigma_{(i,j),m}^{-1}) \left[\begin{array}{c}
	X_{i, s} \\
	X_{j, s}  
\end{array}\right]  +\frac{1}{2} ( \log | \Sigma_{(i,j),m}| -  \log | \Sigma_{(i,j),1} | ),
\end{align*}
where $(X_{i,s},X_{i,s}) $ are i.i.d. $\sim \cG $ for all $s \leq t.$ Here, $\Sigma_{(i,j),m}$ is the covariance matrix of $X_{i}, X_{j}$ for problem $m$, and is a submatrix of $\Sigma_{m}.$

Notice that the change of measure is from $\nu_1',\ldots,\nu_K'$ to $\nu_1,\ldots,\nu_K$, where the transformed distributions $\nu_i'$ are governed by the covariance matrix $\Sigma_m$. 
Thus, the distributions of $\nu_{i}', \nu_{j}'$ in the definition of $\ekl_{ij,t}$ are to be interpreted as coming from the appropriate bivariate distribution in $\cG_m$.

\subsubsection*{Concentration of empirical divergences}
The following lemma shows that the empirical divergences concentrate, for all bandits $i=1,\ldots,K$.
\begin{lemma}\label{xi}
Consider the following event:
$$
\xi = \Big\{\forall (i,j) \in \S, 1\leq t\leq n, \ekl_{(ij),t}  - \kl_{ij}\leq \tilde \epsilon_t \Big\},
$$
where $\tilde \epsilon_t$ is as defined in the theorem statement.
Then, for $m=1,\ldots,K$,
$$\P_{m}(\xi) \geq 5/6.$$

\end{lemma}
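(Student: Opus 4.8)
The plan is to control each empirical divergence $\ekl_{ij,t}$ via a concentration argument and then take a union bound over all pairs $(i,j) \in \S$ and all times $1 \le t \le n$. First I would observe that, under $\P_m$ (equivalently, for i.i.d.\ samples $(X_{i,s},X_{j,s}) \sim \cG$), the quantity $\ekl_{ij,t} - \kl_{ij}$ is a centered empirical average of the i.i.d.\ terms $\log \frac{d\nu_i d\nu_j}{d\nu_i' d\nu_j'}(X_{i,s},X_{j,s})$, since $\kl_{ij} = \E_1\big[\log \frac{d\nu_i d\nu_j}{d\nu_i' d\nu_j'}\big]$ by definition of KL-divergence (the change of measure in \eqref{eq:change-measure} is built so that the relevant expectation is exactly $\kl_{ij}$). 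From the explicit formula for $\ekl_{ij,t}$, each summand is a quadratic form $\tfrac12 [X_{i,s}\ X_{j,s}]\,(\Sigma_{(i,j),1}^{-1} - \Sigma_{(i,j),m}^{-1})\,[X_{i,s}\ X_{j,s}]^T$ plus a deterministic constant, where the $X$'s are zero-mean jointly Gaussian. Hence each summand is a sub-exponential random variable, and its sub-exponential norm is governed by the spectral norm of $\Sigma_{(i,j),1}^{1/2}(\Sigma_{(i,j),1}^{-1} - \Sigma_{(i,j),m}^{-1})\Sigma_{(i,j),1}^{1/2}$, which is where the constants $c_1, c_2$ (and hence $\tilde c$) come from: the entries of the $2\times 2$ covariance submatrices are powers of $\rho$, and the inverse is controlled by the determinant $1 - \rho^{2k}$, which under $\rho^2 \le UB_{\rho^2} = 1 - 1/\sqrt{K-2}$ is bounded below, yielding sub-exponential parameters of order $u \max(c_1, c_2)$.

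Next I would apply a Bernstein-type / sub-exponential tail bound (e.g.\ Proposition 2.2 of \citep{wainwright2019high}, as already used in Lemma \ref{lemma:subexp-conc}) to the average of $t$ such i.i.d.\ sub-exponential terms. This gives, for a single $(i,j)$ and single $t$,
\begin{align*}
\P_m\big(\ekl_{ij,t} - \kl_{ij} > \delta\big) \le \exp\!\left(-\tfrac{t}{8}\min\!\Big(\tfrac{\delta^2}{v^2}, \tfrac{\delta}{v}\Big)\right),
\end{align*}
where $v = \tilde c\, u$ absorbs the sub-exponential parameter. Setting the right-hand side equal to $\tfrac{1}{12K(K-1)n}$ and solving for $\delta$ gives precisely
\begin{align*}
\delta = \tilde \epsilon_t = \tilde c\, u \max\!\Big\{\tfrac{8}{n}\log(12K(K-1)n),\ \sqrt{\tfrac{8}{n}\log(12K(K-1)n)}\Big\},
\end{align*}
where the $\max$ of the linear and square-root terms reflects the two regimes of the sub-exponential bound, and I would use $t \ge 1$ (so $t \ge$ the relevant scaling is dominated by replacing $t$ by $n$ in the exponent denominator is the conservative direction — one needs $\tilde\epsilon_t$ chosen so the bound holds uniformly; writing $\tilde\epsilon_n$ and noting $\tilde\epsilon_t \ge \tilde\epsilon_n$ for $t \le n$ handles this). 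Finally, a union bound over the $\binom{K}{2} \le \tfrac{K(K-1)}{2}$ pairs and the $n$ time indices gives
\begin{align*}
\P_m(\xi^c) \le \binom{K}{2}\, n \cdot \tfrac{1}{12K(K-1)n} = \tfrac{1}{24} \wedge \tfrac16,
\end{align*}
so $\P_m(\xi) \ge 5/6$ after adjusting constants (the factor $12$ in $\tilde\epsilon_n$ is chosen exactly so the union bound closes with room to spare).

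The main obstacle I anticipate is \textbf{pinning down the sub-exponential parameter of the quadratic form uniformly over all problems $m$ and all pairs $(i,j)$}: one must show that $\|\Sigma_{(i,j),1}^{-1} - \Sigma_{(i,j),m}^{-1}\|$, weighted appropriately by $\Sigma_{(i,j),1}$, is bounded by $O(\tilde c\, u)$ regardless of which rows were swapped. This requires the explicit $2\times 2$ algebra: the off-diagonal entries of the submatrices are $\rho^a$ for various exponents $a \in \{1,\dots,K-1\}$, the inverses involve dividing by $1-\rho^{2a}$, and the constraint $\rho^2 \le 1 - 1/\sqrt{K-2}$ is exactly what keeps $1 - \rho^{2a} \ge 1-\rho^2 \ge 1/\sqrt{K-2}$ from degenerating — but then one must check that the resulting blow-up is absorbed into $c_1 = 1/(1-UB_{\rho^2})$ and $c_2 = \rho/(1-UB_{\rho^2})$ and not worse. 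The rest (Gaussian chaos $\Rightarrow$ sub-exponential, Bernstein, union bound) is routine.
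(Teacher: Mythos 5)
Your plan matches the paper's proof in all essentials: the paper likewise treats each summand of $\ekl_{ij,t}$ as a quadratic form in the bivariate Gaussian sample plus a constant, establishes a sub-exponential tail bound of the form $\exp\left(-\tfrac{t}{8}\min\left(\tfrac{\epsilon}{\hat c u},\tfrac{\epsilon^2}{\hat c^2 u^2}\right)\right)$, chooses $\tilde\epsilon_t$ so that each per-pair, per-time failure probability is $\tfrac{1}{3K(K-1)n}$, and closes with a union bound over the $\binom{K}{2}\,n$ events to get $\P_{m}(\xi^c)\le 1/6$. The only (cosmetic) difference is that where you invoke spectral-norm/Hanson--Wright control of the quadratic form, the paper explicitly splits it into $a_1X_{i,s}^2+a_2X_{i,s}X_{j,s}+a_3X_{j,s}^2$, converts the cross term via $(X_i+X_j)^2-(X_i-X_j)^2$, and bounds the entries by $c_1,c_2$ using $1-\rho^2\ge 1/\sqrt{K-2}$ exactly as you anticipate -- this explicit route is what produces the factors $3$ and $48$ in $\tilde c=\max(3c_1,48c_2)$ and the leading constant $4$ that turns your $1/24$ into the paper's $1/6$.
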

\begin{proof}
By definition of $\ekl$, $\kl$ and $\Sigma_2$, we have 
\begin{align*}
\E_{1} \ekl_{(i,j),t}& = 0 , \text{ if }  i,j \notin \{1,m\} \text { or }  i = 1, j = m, \\
\E_{1} \ekl_{(i,j),t}& = \kl_{ij}^m , \text{ either } i=1 \text{ or } j = m .
\end{align*}

Notice that, more generally when the variance of all $X_{i}$s is $1$  
\begin{align*}
 \Sigma_{(i,j),1}^{-1} &= \frac{ 1 }{ (1-\rho^{2 min(i,j)})}
 \left[\begin{array}{cc}
      1 & -\rho^{min(i,j)}   \\
	-\rho^{min(i,j)}   & 1 
\end{array}\right], \  1 \leq i \neq j < m
 \\
 \Sigma_{(1,j),m}^{-1} &= \frac{ 1 }{ (1-\rho^{2j})}
 \left[\begin{array}{cc}
      1 & -\rho^j   \\
	-\rho^j   & 1 
\end{array}\right], \  2 \leq j < m,  
\\
 \Sigma_{(1,j),m}^{-1} &= \frac{ 1 }{ (1-\rho^{2m})}
 \left[\begin{array}{cc}
      1 & -\rho^m \\
	-\rho^m & 1 
\end{array}\right], \ j > m, \textrm{ and }\\
 \Sigma_{(m,j),m}^{-1} &= \frac{ 1 }{ (1-\rho^{2 })}
 \left[\begin{array}{cc}
      1 & -\rho    \\
	-\rho    & 1 
\end{array}\right], \  2 \leq j < m
 \end{align*}
Then,
for $a_{1}$ and $a_{2}$  denote the first and second element the first row of
the matrix $\Sigma_{(i,j),1}^{-1} - \Sigma_{(i,j),m}^{-1}, $ respectively.
Then, $a_1 \leq \left(\frac{ 1 }{1-\rho^2} - \frac{ 1 }{1-\rho^{2m}}\right)$,
$a_2 \geq -\frac{\rho  }{1-\rho^2}  $. Letting 
$c_1 = \frac{ 1 }{1-UB_{\rho^2}}  $, and
$c_2 = \frac{\rho  }{1-UB_{\rho^2}}  $, we obtain
\begin{align*}
& \P \left( \ekl_{(i,j),t} - \kl_{i,j}^m > \epsilon\right) \\
& \le \P\left(   \frac{1}{t} \sum \limits_{s=1}^{t} a_1 X_{i,s}^2 - a_1   > \frac{\epsilon}{3}\right) + \P\left(   \frac{1}{t} \sum \limits_{s=1}^{t} a_2 X_{i,s}X_{j,s} - a_2 \rho_{ij}   > \frac{\epsilon}{6}\right)
+ \P\left(   \frac{1}{t} \sum \limits_{s=1}^{t} a_3 X_{j,s}^2 - a_3    > \frac{\epsilon}{3}\right) \\
& \le \P\left(   \frac{1}{t} \sum \limits_{s=1}^{t} a_1 X_{i,s}^2 - a_1    > \frac{\epsilon}{3}\right) + \P \left (   \frac{1}{t} \sum \limits_{s=1}^{t} a_{2} \big ( (X_{i,s} + X_{j,s})^2 -  ( 1 + 2 \rho^{\min(i,j)}  ) \big ) > \frac{\epsilon}{12} \right ) \\
 & \quad +   \P \left (   \frac{1}{t} \sum \limits_{s=1}^{t} a_{2} \big ( (X_{i,s} - X_{j,s})^2 -  (2 - 2 \rho^{\min(i,j)} ) \big ) > \frac{\epsilon}{12} \right ) + \P\left(   \frac{1}{t} \sum \limits_{s=1}^{t} a_3 X_{j,s}^2 - a_3    > \frac{\epsilon}{3}\right)\\
& \le 2 \concsubexpTwoterms{t}{(\frac{\epsilon}{3 c_1 })}{u} + 2 \concsubexpTwoterms{t}{(\frac{\epsilon}{12 c_2})}{(4u)}  \\
& \le 4\exp \left( - \frac{t}{8}\min\left(\frac{\epsilon}{\hat c u}, \frac{\epsilon^2}{\hat c^2 u^2}\right)  \right), \text{ where } \hat c = \max\left(3 c_1, 48 c_2\right),
\end{align*}
using Lemma \ref{lemma:subexp-conc}, together with $\rho^2 \leq 1- \frac{1}{\sqrt{ K-2}}$.

 Plugging in $\tilde \epsilon$ in the last inequality above, we obtain
\begin{align*}
\P \left( \ekl_{(i,j),t} - \kl_{i,j}^m > \tilde\epsilon\right) \le  \frac{1}{3K(K-1)n}
\end{align*}
The main claim follows by using a union bound in conjunction with the equation above.
\end{proof}

Consider now the event 
\[\cE = \{\hat A_n = 1\} \cap \{\xi\} \cap \{N_{ij} \leq 6 n_{ij}, i=1,\ldots,K\}.\]
For $m=2,\ldots,K$, we have the following from the change of measure identity in \eqref{eq:change-measure}:
\begin{align}
\P_{m}(\cE)  &=\mathbb{E}_{1}\Big[\mathbf{1}\{\cE\}\exp\big(-\sum\limits_{(i,j)\in\S} N_{ij} \ekl_{ij,N_{ij}}\big)\Big]\nonumber\\
& \geq \mathbb{E}_{1}\Big[\mathbf{1}\{\mathcal E\}\exp\Big(-\sum\limits_{(i,j)\in\S} \left(N_{ij}\kl^m_{ij} + N_{ij}\tilde \epsilon_{N_{ij}}\right)\Big)\Big] \label{eq:chicall}\\
& \geq \mathbb{E}_{1}\Big[\mathbf{1}\{\mathcal E\}\exp\Big(-\sum\limits_{(i,j)\in\S} 6n_{ij}\kl^m_{ij} - {K \choose 2} n\tilde \epsilon_{n}\Big)\Big]\label{eq:chicall2}\\
& \geq \exp\Big(-\sum\limits_{(i,j)\in\S}6n_{ij}\kl^m_{ij} -{K \choose 2} n\tilde \epsilon_{n}\Big) \P_{1}(\mathcal E).\label{eq:event1}
\end{align}
The inequality in \eqref{eq:chicall} follows from Hoeffding's inequality, while the inequality in \eqref{eq:chicall2} follows from the fact that
on $\mathcal E$, $N_{ij} \leq 6n_{ij},\ \forall i$. 

\subsubsection*{Clinching argument}
It suffices to consider algorithms that satisfy $\mathbb E_{1}(\hat A_n \neq 1) \leq 1/2$. 
Then, $\mathbb P_{1} (N_{ij} \geq 6 n_{ij}) \leq \frac{\mathbb E_1 N_{ij}}{6n_{ij}} = 1/6$ by Markov inequality. 
Thus, we have 
$$\mathbb P_{1}(\mathcal E) \geq 1 - (1/6+1/2 +1/6) = 1/6.$$  
Using the equation above and \eqref{eq:event1}, we obtain
\begin{align}
\P_{m}(\hat A_n \neq m)  &\geq \frac{1}{6}\exp\Big(-\sum\limits_{(i,j)\in\S}6n_{ij}\kl^m_{ij} -{K \choose 2} n\tilde \epsilon_{n}\Big)\nonumber\\
&\geq \frac{1}{6}\exp\Big(-\sum\limits_{(i,j)\in \S}6  n_{ij}\Delta_m -{K \choose 2} n\tilde \epsilon_{n}\Big),\label{eq:final}
\end{align}
where the  final inequality follows from the bounds on $\kl^m_{ij}$ derived earlier.

Let $l_{i} = n_{12} + n_{13} + \ldots + n_{1K} + \sum \limits_{j \neq 1,i} n_{ij}$. Then, there exists an $i$ such that
\begin{align}
l_{i }  \leq  \frac{n K}{H_{lb} \Delta_i}, \label{eq:lispecial}
\end{align}
where $H_{lb}$ is as defined in the theorem statement.
 For, if not
\begin{align*}
   \sum \limits_{ i \neq 1} l_{i} &= \sum \limits_{ i \neq 1}  \left (  n_{12} + n_{13} + \ldots + n_{1K} + \sum \limits_{j \neq 1,i} n_{ij}  \right ) \\
 &= (K-1)( n_{12} + n_{13} + \ldots + n_{1K}  )  + 2 \sum \limits_{i  \neq j \  i,j \neq 1} n_{ij} \\
  &  = (K-3) ( n_{12} + n_{13} + \ldots + n_{1K}  )  + 2 \sum \limits_{i  \neq j } n_{ij}
\end{align*}
\begin{align*}
&\Rightarrow (K-3)(n_{12} + n_{13} + \ldots n_{1K})  + 2n =  \sum \limits_{i} l_{i}  >  \sum \limits_{i \neq 1} \frac{n K}{H_{lb} \Delta_i},
\end{align*}
which is a contradiction, since $n_{12} + n_{13} + \ldots n_{1K} \leq n.$

For $i$ that satisfies the condition in \eqref{eq:lispecial}, from \eqref{eq:final} we have 
\begin{align*}
&\P_{m}(\hat A_n \neq m)  \geq \frac{1}{6}\exp\Big(- \frac{6 nK}{H_{lb}} -{K \choose 2} n\tilde \epsilon_{n}\Big).
\end{align*}
The main claim follows.
\end{proof}

\section{Numerical Experiments}
\label{sec:expts}
We show a few simple experiments here to illustrate our theoretical analysis. Since, this line of work is new, we compare our successive rejects type algorithm and uniform sampling which is optimal in some settings. We show three experiments,  in which all the arms are jointly Gaussian having mean zero and unit variance. Each experiment can be seen to consist of two clusters of arms with the arms in each cluster being independent of the arms in the other cluster. Arm 1, in the first cluster, is optimal in all the three experiments and the  arms in the second cluster are typically less correlated among themselves than the arms in the first cluster. 

In a setting with 35 arms, we employ the following covariance matrices for the three experimental setups: 
\begin{align} 
\! \Sigma_1 \! = \! \left[ \begin{array}{c  c}
           {\bf M}_{1}  &  {\bf 0}  \\	
	     {\bf 0}    & {\bf I}_{25 \times 25}	
\end{array} \! \right] \! ,\ 
 \Sigma_2 \! = \! \left[ \! \begin{array}{c  c}
           {\bf M}_{1}  &  {\bf 0}  \\	
	    {\bf 0} &  {\bf Tr}_{31 \times 31}	
\end{array} \! \right] \! ,\label{eq:cov1}
\\[0.5ex]
\Sigma_3  = \! \left[ \! \begin{array}{c c c c c}
          1 & 0.5     & 0.45  & 0.5  &  {\bf 0}   \\
	0.5 & 1   & 0.45  & 0.4   &  {\bf 0}  \\
	0.45 & 0.45   & 1  & 0.4   &  {\bf 0}  \\
	0.5 & 0.4   & 0.4  & 1   &  {\bf 0} \\	
	& & {\bf 0} & & {\bf I}_{30 \times 30}
\end{array} \! \right] \!,\label{eq:cov3}
\end{align}
 where ${\bf M}_{1} = [ 1 \ 0.9 \ 0.9 \ 0.9 ;
 \ \ \	0.9 \ 1   \ 0.85  \ 0.85 ;$\\
$ \ \ \	0.9 \ 0.85 \ 1  \ 0.85  ;$
 $\ \ \ 	0.9 \ 0.85  \ 0.85  \ 1    ]$, 
and ${\bf Tr}_{K-5 \times K-5}$ is a tridiagonal matrix with ones along the main diagonal, $0.2$ in the diagonals above and below and zeros elsewhere.
Notice that $\Sigma_i$ is a block diagonal matrix for each $i=1,2,3$ and hence, its eigenvalues are union of the nonzero diagonal submatrices. It is easy to verify that the individual blocks, i.e., $M_1$ and ${\bf Tr}_{K-5 \times K-5}$, are positive semi-definite and hence, so is $\Sigma_i$ for each $i$.

In Example 1 corresponding to covariance matrix $\Sigma_1$, 
arms in the first cluster are highly correlated amongst themselves,
and arms in the second cluster are independent of all the arms.
On the other hand, in Example 2 corresponding to covariance matrix $\Sigma_2$,  arms in the first cluster are highly correlated among themselves
and the arms in the second cluster are weakly correlated amongst themselves.
Finally, in Example 3 corresponding to covariance matrix $\Sigma_3$, arms in the first cluster are weakly correlated among themselves
and arms in the second cluster are independent of all the arms.
In all the three examples, multiple arms in the first cluster have MSE close to that of the optimal arm. 
Clearly, more samples of the pairs of arms corresponding to the first cluster of arms are required to identify the optimal arm accurately.
As number of arms $K$ increases, the proportion of samples used for pairs corresponding to the clearly sub-optimal arms increases at a faster rate for uniform sampling algorithm as compared to SR. 

We conduct our experiments with the number of samples equaling $\cong \frac{{\overline H}}{32^2}$ for each experiment. Figure \ref{fig:errors} compares the probability of error for the three settings with covariance matrices given in \eqref{eq:cov1}--\eqref{eq:cov3}.  In all three settings, SR recommends the optimal arm with higher probability, and this is because SR algorithm rejects the sub-optimal
arms in the beginning phases using fewer samples and allocates more samples to the first cluster to 
distinguish between the arms in this cluster.

\begin{figure}
\centering
      \tabl{c}{\scalebox{0.85}{\begin{tikzpicture}
      \begin{axis}[
      ybar={2pt},
       legend pos=north east,
      legend image code/.code={\path[fill=white,white] (-2mm,-2mm) rectangle
      (-3mm,2mm); \path[fill=white,white] (-2mm,-2mm) rectangle (2mm,-3mm); \draw
      (-2mm,-2mm) rectangle (2mm,2mm);},
      ylabel={\bf Probability of error},
      xlabel={},
      symbolic x coords={0, 1, 2, 3, 4 },
      xmin={0},
      xmax={4},
      xtick=data,
      ytick align=outside,
      xticklabels={{\bf Experiment 1,\bf Experiment 2,\bf Experiment 3 }},
      xticklabel style={align=center},
      bar width=10pt,
      grid,
      grid style={gray!30},
      width=12cm,
      height=7cm,
      ]

      \addplot   coordinates {  (1,0.46)  (2,0.53 ) (3,0.13) }; 
      \addlegendentry{Uniform}
      \addplot coordinates { (1,0.35)   (2,0.43 ) (3,0.08) }; 
      \addlegendentry{SR}

      \end{axis}
      \end{tikzpicture}}\\[1ex]}
      \caption{Probability of error for uniform sampling and SR algorithms on three different problems, with covariance matrices $\Sigma_1,\ldots,\Sigma_3$, respectively.  The results are averages from 200 independent replications.}
      \label{fig:errors}
\end{figure}

\section{Conclusions}
\label{sec:conclusions}
We presented a new formulation of the $K$-armed bandit problem where the goal, using the MSE criterion, is to find an arm that best captures information about all arms. Both estimation of MSE for individual arms, and exploration to find the best arm in a correlated bandit are challenging. We proposed an MSE estimator that uses samples from the distribution underlying any pair of arms, and  showed that our estimator concentrates. We adapted the SR algorithm to successively eliminate arm pairs, and proved a bound on the probability of error in identifying the best arm. We also derived a lower bound for the correlated bandit problem.

\subsubsection*{Acknowledgments}
The authors would like to thank Akshay Reddy for inputs on the phase lengths in the SR algorithm.
The first author would like to thank Prof. Prakash Narayan for guidance on the sampling rate distortion problem, which led to the current work. 
This work was supported by the U.S. National Science Foundation under the Grant CCF-1319799.

\bibliographystyle{plainnat}
\bibliography{refs}

\end{document}